\newtheorem{thm}{Theorem}
\newcommand*{\defeq}{\stackrel{\Delta}{=}}
\title{Scaling Equilibrium Propagation to Deep ConvNets \\
        by Drastically Reducing its Gradient Estimator Bias}
\author{
    {\bfseries Axel Laborieux $^{1, *}$, Maxence Ernoult$^{1,2, *}$,  Benjamin Scellier$^{3, \dagger}$,\vspace{0.2cm}} \\
    {\bfseries Yoshua Bengio$^{3,4}$, Julie Grollier$^2$, Damien Querlioz$^1$ \vspace{0.2cm}}\\
    $^1$Université Paris-Saclay, CNRS, C2N, 91120, Palaiseau, France \\
    $^2$Unité Mixte de Physique, CNRS, Thales, Université Paris-Saclay\\
    $^3$Mila, Université de Montréal\\
    $^4$Canadian Institute for Advanced Research\\
    ${}^\dagger$ Currently at Google\\
    \textsuperscript{*} Corresponding authors: \texttt{\{axel.laborieux, maxence.ernoult\}@c2n.upsaclay.fr}
}
\begin{document}

\maketitle

\begin{abstract}
 Equilibrium Propagation (EP) is a biologically-inspired algorithm for convergent RNNs with a local learning rule that comes with strong theoretical guarantees. The parameter updates of the neural network during the credit assignment phase have been shown mathematically to approach the gradients provided by Backpropagation Through Time (BPTT) when the network is infinitesimally nudged toward its target.  In practice, however, training a network with the gradient estimates provided by EP does not scale to visual tasks harder than MNIST. 
 In this work, we show that a bias in the gradient estimate of EP, inherent in the use of finite nudging, is responsible for this phenomenon and that cancelling it allows training deep ConvNets by EP. We show that this bias can be greatly reduced by using symmetric nudging (a positive nudging and a negative one). We also generalize previous EP equations to the case of cross-entropy loss (by opposition to squared error). As a result of these advances, we are able to achieve a test error of 11.7\% on CIFAR-10 by EP, which approaches the one achieved by BPTT and provides a major improvement with respect to the standard EP approach with same-sign nudging that gives 86\% test error. We also apply these techniques to train an architecture with asymmetric forward and backward connections, yielding a 13.2\% test error. These results highlight EP as a compelling biologically-plausible approach to compute error gradients in deep neural networks.
\end{abstract}

\section{Introduction}
\label{sec:intro}
How synapses in hierarchical neural circuits are adjusted throughout learning a task remains a challenging question called the credit assignment problem \citep{richards2019deep}. 
Equilibrium Propagation (EP) \citep{scellier2017equilibrium} provides a biologically plausible solution to this problem in artificial neural networks. EP is an algorithm for convergent RNNs which, by definition, are given a static input and whose recurrent dynamics converge to a steady state corresponding to the prediction of the network. EP proceeds in two phases, bringing the network to a first steady state, then nudging the output layer of the network towards a ground-truth target until reaching a second steady state. 
During the second phase of EP, the perturbation originating from the output layer propagates through time to upstream layers, creating local error signals that match exactly those that are computed by Backpropagation Through Time (BPTT), the canonical approach for training  RNNs \citep{ernoult2019updates}.
Owing to this strong theoretical guarantee, EP can provide leads for understanding biological learning \citep{lillicrap2020backpropagation}. 
Moreover, the spatial locality of the learning rule prescribed by EP and the possibility to make it also local in time \citep{ernoult2020equilibrium} is highly attractive for designing energy-efficient ``neuromorphic'' hardware implementations of gradient-based learning algorithms \citep{zoppo2020equilibrium,ernoult2020equilibrium}.  

To meet these expectations, however, EP should be able to scale to complex tasks. Until now, works on EP \citep{scellier2017equilibrium,o2018initialized,o2019training,ernoult2019updates,ernoult2020equilibrium} limited their experiments to the MNIST classification task and to shallow network architectures. Despite the theoretical guarantees of EP, the literature suggests that no implementation of EP has thus far succeeded to match the performance of standard deep learning approaches to train deep networks on hard visual tasks. 
This problem is even more challenging when using a more bio-plausible topology where the synaptic connections of the network are asymmetric: existing proposals of EP in this situation \citep{scellier2018generalization, ernoult2020equilibrium} lead to a degradation of accuracy on MNIST compared to standard EP. 
In this work, we show that performing the second phase of EP with nudging strength of constant sign induces a systematic first order bias in the EP gradient estimate which, once cancelled, unlocks the training of deep ConvNets, with symmetric or asymmetric connections, with performance closely matching that of BPTT on CIFAR-10. We also propose to implement the neural network predictor as an external softmax readout. This modification preserves the local nature of EP and  allows us to use the cross-entropy loss, contrary to previous approaches using the squared error loss and where the predictor takes part in the free dynamics of the system. 

Other biologically plausible alternatives to Backpropagation (BP) have attempted to scale to hard vision tasks. \citet{bartunov2018assessing} investigated the use of Feedback Alignment (FA) \citep{lillicrap2016random} and variants of Target Propagation (TP) \citep{lecun1987phd,bengio2014auto} on CIFAR-10 and ImageNet, showing that they perform significantly worse than BP. When the alignment between forward and backward weights is enhanced with extra mechanisms \citep{akrout2019deep}, FA performs better on ImageNet than Sign-Symmetry (SS) \citep{xiao2018biologically}, where feedback weights are taken to be the sign of the forward weights, and almost as well as BP. 
However, in FA and TP, the error feedback does not affect the forward neural activity and is instead routed through a distinct backward pathway, an issue that EP avoids. 
\citet{payeur2020burst} proposed a burst-dependent learning rule that also addresses this problem and whose rate-based equivalent, relying on the use of specialized synapses and complex network topology, has been benchmarked against CIFAR-10 and ImageNet.
In comparison with these approaches, EP offers a minimalistic circuit requirement to handle both inference and gradient computation, which makes it an outstanding candidate for energy-efficient neuromorphic learning hardware design.

More specifically, the contributions of this work\footnote{We post the code at: \url{https://github.com/Laborieux-Axel/Equilibrium-Propagation}} are the following:

\begin{itemize}
    \item We introduce a new method to estimate the gradient of the loss based on three steady states instead of two (section~\ref{sec:improveEstimate}). This approach enables us to achieve 11.68\% test error on CIFAR-10, with 0.6 \% performance degradation only with respect to BPTT. Conversely, we show that using a nudging strength of constant sign yields 86.64\% test error.
    \item We propose to implement the output layer of the neural network as a softmax readout, which subsequently allows us to optimize the cross-entropy loss function with EP. This method improves the classification performance on CIFAR-10 with respect to the use of the squared error loss and is also closer to the one achieved with BPTT (section \ref{sec:readout}). 
    \item Finally, based on ideas of \citet{scellier2018generalization} and \citet{kolen1994backpropagation}, we adapt the learning rule of EP for architectures with distinct (asymmetric) forward and backward connections, yielding only 1.5\% performance degradation on CIFAR-10 compared to symmetric connections (section \ref{sec:vf}).
\end{itemize}

\section{Background}
\label{sec:background}
\subsection{Convergent RNNs With Static Input}

We consider the setting of supervised learning where we are given an input $x$ (e.g., an image) and want to predict a target $y$ (e.g., the class label of that image).
To solve this type of task, Equilibrium Propagation (EP) relies on convergent RNNs, where the input of the RNN at each time step is static and equal to $x$, and the state $s$ of the neural network converges to a steady-state $s_*$.
EP applies to RNNs where the transition function derives from a scalar primitive\footnote{In the original version of EP for real-time dynamical systems \citep{scellier2017equilibrium}, the dynamics derive from an ``energy function'' $E$, which plays a similar role to the primitive function $\Phi$ in the discrete-time setting studied here.} $\Phi$ \citep{ernoult2019updates}.
In this situation, the dynamics of a network with parameters $\theta$ is given by
\begin{equation}
\label{eq:dynamics}
    s_{t+1} = \frac{\partial \Phi}{\partial s}(x, s_{t}, \theta),
\end{equation}
where $s_t$ is the state of the RNN at time step $t$.
After the dynamics have converged\footnote{\citet{scarselli2008graph} discuss sufficient conditions on the transition function to ensure convergence.} at some time step $T$, the network is in the steady state $s_T = s_*$, which, by definition, satisfies:
\begin{equation}
\label{eq:steady}
    s_* = \frac{\partial \Phi}{\partial s}(x, s_*, \theta). 
\end{equation}
The goal of learning is to optimize $\theta$ to minimize the loss $\mathcal{L}^{*} = \ell(s_*, y)$, where $\ell$ is a differentiable cost function.

\subsection{Training Procedures For Convergent RNNs}
\subsubsection{Equilibrium Propagation (EP)}

\citet{scellier2017equilibrium} introduced Equilibrium Propagation in the case of real time dynamics. Subsequent work adapted it to discrete-time dynamics, bringing it closer to conventional deep learning \citep{ernoult2019updates}. EP consists of two distinct phases. During the first (``free'') phase, the RNN evolves according to Eq.~(\ref{eq:dynamics}) for $T$ time steps to ensure convergence to a first steady state $s_*$. During the second (``nudged'') phase of EP, a nudging term $-\beta \frac{\partial \ell}{\partial s}$ is added to the dynamics, with $\beta$ a small scaling factor. 
Denoting $s_{0}^{\beta}$, $s_{1}^{\beta}$, $s_{2}^{\beta}$... the states during the second phase, the dynamics reads
\begin{equation}
\label{eq:secondphase}
    s_{0}^{\beta} = s_*,\quad \mbox{and} \quad \forall t>0, \quad s_{t+1}^{\beta} = \frac{\partial \Phi}{\partial s}(x, s_{t}^{\beta}, \theta) - \beta \frac{\partial \ell}{ \partial s}(s_{t}^{\beta}, y).
\end{equation}
The RNN then reaches a new steady state denoted $s_{*}^{\beta}$.
\citet{scellier2017equilibrium} proposed the update rule, denoting $\eta$ the learning rate applied:
\begin{equation}
\label{eq:estimate}
    \Delta \theta = \eta \widehat{\nabla}^{\rm{EP}}(\beta), \qquad \text{where} \qquad \widehat{\nabla}^{\rm{EP}}(\beta) \defeq \frac{1}{\beta} \left( \frac{\partial \Phi}{\partial \theta}(x, s_{*}^{\beta}, \theta) - \frac{\partial \Phi}{\partial \theta}(x, s_{*}, \theta) \right).
\end{equation}
They proved that this learning rule performs stochastic gradient descent in the limit $\beta \to 0$:
\begin{equation}
\label{eq:formula1}
    \lim_{\beta \to 0} \widehat{\nabla}^{\rm{EP}}(\beta) = - \frac{\partial \mathcal{L}^{*}}{\partial \theta}.
\end{equation}

\subsubsection{Backpropagation Through Time (BPTT)}

The convergent RNNs considered by EP can also be trained by Backpropagation Through Time (BPTT). At each BPTT training iteration, the first phase is performed for $T$ time steps until the network reaches the steady state $s_T = s_*$. The loss at the final time step is computed and the gradients are subsequently backpropagated through the computational graph of the first phase, backward in time. Let us denote $\nabla^{\rm{BPTT}}(t)$ the gradient computed by BPTT truncated to the last $t$ time steps ($T-t, \ldots, T$), which we define formally in Appendix~\ref{sec:BPTT}. A theorem derived by \citet{ernoult2019updates}, inspired from \citet{scellier2019equivalence},  shows that, provided convergence in the first phase has been reached after $T-K$ time steps (i.e., $s_{T-K} = s_{T-K+1} = \ldots = s_T = s_*$), the gradients of EP match those computed by BPTT in the limit $\beta \to 0$, in the first $K$ time steps of the second phase:
\begin{equation}
    \label{eq:integrated}
    \forall t=1,2,\ldots,K,\qquad \widehat{\nabla}^{\rm{EP}}(\beta,t) \defeq
     \frac{1}{\beta} \left( \frac{\partial \Phi}{\partial \theta}(x,s_t^{\beta},\theta) - \frac{\partial \Phi}{\partial \theta}(x,s_*,\theta) \right) \xrightarrow[\beta \to 0]{} \nabla^{\rm{BPTT}}(t).
\end{equation}

\subsection{Convolutional Architectures for Convergent RNNs}
\label{sec:convRNN}

\begin{figure}[ht!]
  \centering
  \includegraphics[width=0.65\textwidth]{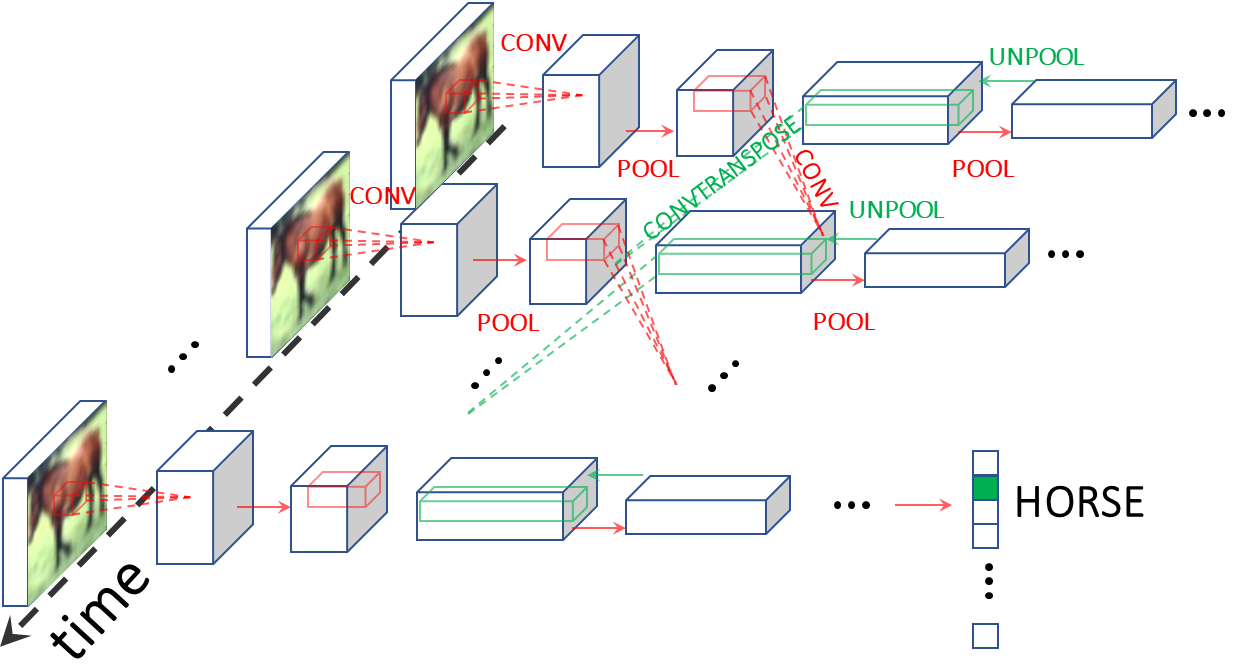}
  \caption{Schematic of the architecture used. We use Equilibrium Propagation (EP) to train a recurrent ConvNet receiving a static input. Red (resp. green) arrows depict forward (resp. backward) operations, with convolutions and transpose convolutions happening through time. At the final time step, the class prediction is carried out. The use of RNNs is inherent in the credit assignment of EP which uses of the temporal variations of the system as error signals for the gradient computation.}
  \label{fig:architecture}
\end{figure}

A convolutional architecture for convergent RNNs with static input was introduced by \citet{ernoult2019updates} and successfully trained with EP on the MNIST dataset. In this architecture, presented in Fig.~\ref{fig:architecture}, we define $N^{\rm conv}$ and $N^{\rm fc}$ the number of convolutional and fully connected layers respectively, and $N^{\rm tot} \defeq N^{\rm conv} + N^{\rm fc}$. $w_{n}$ denotes the weights connecting $s^{n+1}$ to $s^n$, with $s_0 = x$. For notational simplicity here, whether $w_n$ is a convolutional layer or a fully connected layer is implied by the operator, respectively $\star$ for convolutions and $\cdot$ for linear layers. The primitive function can therefore be defined as:
\begin{equation}
\label{eq:phiCNN}
    \Phi(x, \{s^{n}\}) = \sum_{n =0}^{N^{\rm conv}-1} s^{n+1}\bullet\mathcal{P}\left(w_{n+1}\star s^{n}\right) + \sum_{n = N^{\rm conv}}^{N^{\rm tot}-1} s^{n \top}\cdot w_{n+1}\cdot s^{n+1}, 
\end{equation}
where $\bullet$ is the Euclidean scalar product generalized to pairs of tensors with same arbitrary dimension, and $\mathcal{P}$ is a pooling operation. 
Combining Eqs.~(\ref{eq:dynamics}) and~(\ref{eq:phiCNN}), and restricting the space of the state variables to $[0,1]$, yield the dynamics: %
\begin{align}
\label{eq:conv-dynamics}
\left\{
\begin{array}{ll}
s^{n}_{t+1} &= \sigma \left( \mathcal{P}\left(w_{n-1}\star s^{n-1}_t\right) + \tilde{w}_{n+1}\star \mathcal{P}^{-1}\left(s^{n+1}_{t}\right)\right), \qquad 1 \leq n \leq N^{\rm conv} \\
s^n_{t + 1} &= \sigma \left(w_{n-1}\cdot s^{n-1}_{t} + w^{\top}_{n +1}\cdot s^{n+1}_{t} \right), \qquad N^{\rm conv} < n < N^{\rm tot}
\end{array} 
\right.
\end{align}
where $\sigma$ is an activation function bounded between 0 and 1. Transpose convolution and inverse pooling are respectively defined through the convolution by the flipped kernel $\tilde{w}$ and $\mathcal{P}^{-1}$. Plugging Eq.~(\ref{eq:phiCNN}) into Eq.~(\ref{eq:estimate}) yields the learning rule --- see Appendix~\ref{sec:appConvArch} for implementation details.

\subsection{Equilibrium Propagation with asymmetric synaptic connections}
\label{sec:vf}

In the standard formulation of EP, the dynamics of the neural network derive from a function $\Phi$ (Eq.~(\ref{eq:dynamics})) called the primitive function. For better biological plausibility, subsequent works have proposed a more general formulation of EP which circumvents this requirement and allows training networks with distinct (asymmetric) forward and backward connections \citep{scellier2018generalization,ernoult2020equilibrium}. In this setting, the dynamics of Eq.~(\ref{eq:dynamics}) is changed into the more general form: 
\begin{equation}
\label{eq:general-dynamics}
    s_{t+1} = F(x, s_t, \theta),
\end{equation}
and the conventionally proposed learning rule reads:
\begin{equation}
\label{eq:vf-estimate}
    \Delta \theta = \eta \widehat{\nabla}^{\rm{VF}}(\beta), \qquad \text{where} \qquad \widehat{\nabla}^{\rm{VF}}(\beta) \defeq \frac{1}{\beta} \frac{\partial F}{\partial \theta}(x, s_*, \theta)^\top \cdot \left(s^\beta_* - s_*\right),
\end{equation}
where VF stands for Vector Field \citep{scellier2018generalization}. If the transition function $F$ derives from a primitive function $\Phi$ (i.e., if $F = \frac{\partial \Phi}{\partial s}$), then $\widehat{\nabla}^{\rm VF}(\beta)$ is equal to $\widehat{\nabla}^{\rm EP}(\beta)$ in the limit $\beta \to 0$ ( i.e. $\lim_{\beta \to 0}\widehat{\nabla}^{\rm VF}(\beta) = \lim_{\beta \to 0}\widehat{\nabla}^{\rm EP}(\beta)$).

\section{Improving EP Training}
\label{sec:improveEP}

\citet{ernoult2019updates} showed that the temporal variations of the network over the second phase of EP exactly compute BPTT gradients in the limit $\beta \to 0$ (Eq.~(\ref{eq:integrated})). This result appears to underpin the use of two phases as a fundamental element of EP, but is it really the case? In this section, we revisit EP as a gradient estimation procedure and propose an implementation in three phases instead of two. Moreover, we show how to optimize the cross-entropy loss function with EP. Combining these two new techniques enabled us to achieve the best performance on CIFAR-10 by EP, on architectures with symmetric and asymmetric forward and backward connections.

\subsection{Reducing bias and variance in the gradient estimate of the loss function}
\label{sec:improveEstimate}

\begin{figure}[ht!]
  \centering
  \includegraphics[width=0.65\textwidth]{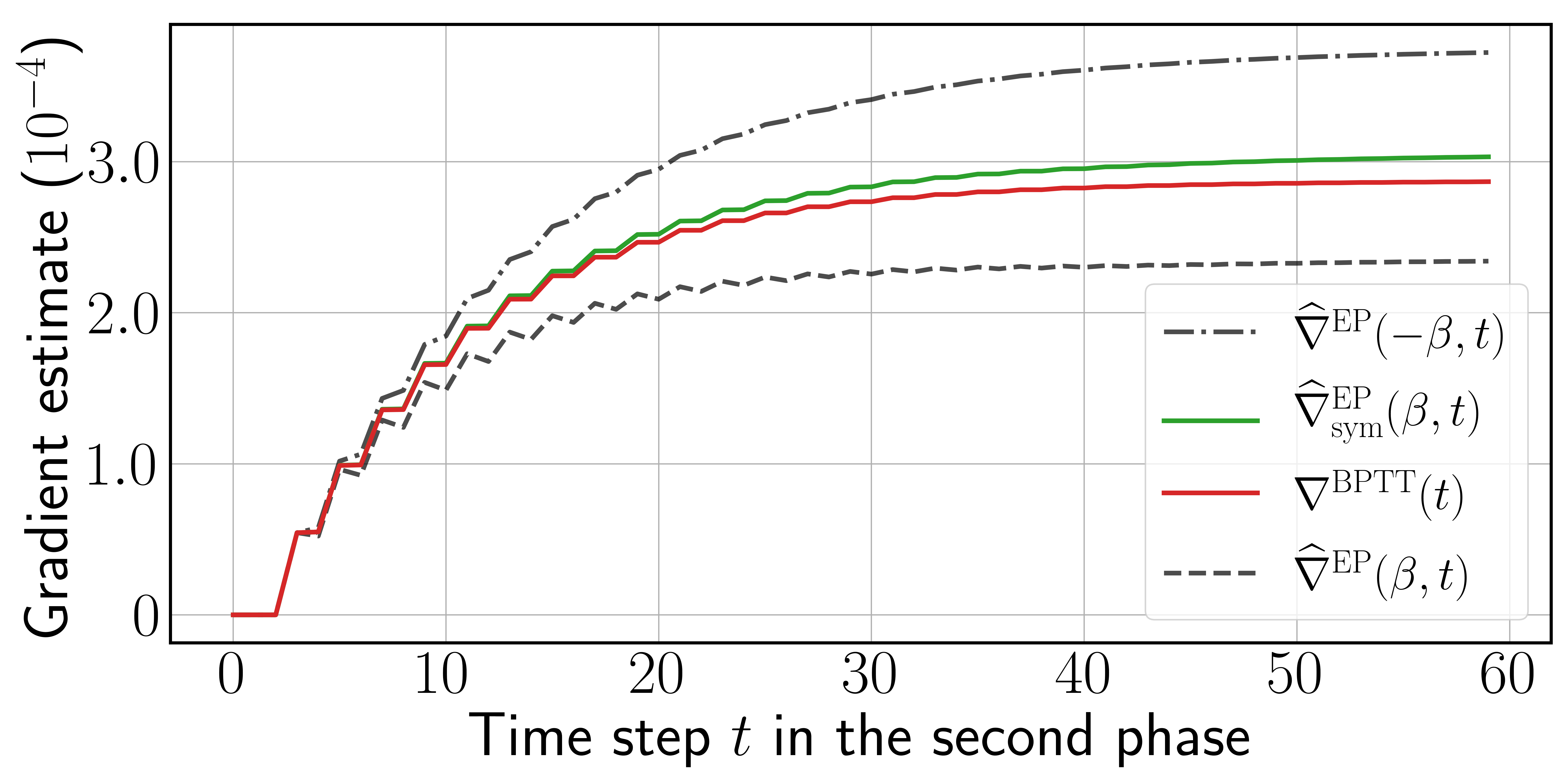}
  \caption{One-sided EP gradient estimate for opposite values of $\beta = 0.1$ (black dashed curves), symmetric EP gradient estimate (green curve) and reference gradients computed by BPTT (red curve) computed over the second phase, for a single weight chosen at random. The time step $t$ is defined for BPTT and EP according to Eq.~(\ref{eq:integrated}). More instances can be found in Appendix \ref{sec:appCompEstimate}.
  }
  \label{fig:thirdphase}
\end{figure}

In the first formulation of EP, \citet{scellier2017equilibrium} state their result as: 
\begin{equation}
\label{eq:formula}
    \left. \frac{d}{d\beta} \right|_{\beta=0} \frac{\partial \Phi}{\partial \theta}(x, s_{*}^{\beta}, \theta) = - \frac{\partial \mathcal{L}^{*}}{\partial \theta}.
\end{equation}
In its original implementation, EP evaluates the left-hand side of Eq.~(\ref{eq:formula}) using the estimate $\widehat{\nabla}^{\rm{EP}}(\beta)$ with two points $\beta = 0$ and $\beta > 0$, thereby calling for the need of two phases. However, the use of $\beta > 0$ in practice induces a systematic first order bias in the gradient estimation provided by EP. In order to eliminate this bias, we propose to perform a third phase with $-\beta$ as the nudging factor, keeping the first and second phases unchanged. We then estimate the gradient of the loss using the following symmetric difference estimate:
\begin{equation}
\label{eq:thirdphase}
\widehat{\nabla}^{\rm{EP}}_{\rm sym}(\beta) \defeq \frac{1}{2\beta} \left( \frac{\partial \Phi}{\partial \theta}(x, s_{*}^{\beta}, \theta) - \frac{\partial \Phi}{\partial \theta}(x, s_{*}^{-\beta}, \theta) \right).
\end{equation}
Indeed, under mild assumptions on the function $\beta \mapsto \frac{\partial \Phi}{\partial \theta}(x,s_{*}^{\beta}, \theta)$, we can show that, as $\beta \to 0$:

\begin{align}
\widehat{\nabla}^{\rm{EP}}(\beta) &= - \frac{\partial \mathcal{L}^{*}}{\partial \theta} + \frac{\beta}{2} \left. \frac{d^2}{d\beta^2} \right|_{\beta=0} \frac{\partial \Phi}{\partial \theta}(s_{*}^{\beta}, \theta) + O(\beta^2), \label{eq:estimate-EP}\\
\widehat{\nabla}^{\rm{EP}}_{\rm sym}(\beta) &= - \frac{\partial \mathcal{L}^{*}}{\partial \theta} + O(\beta^2).\label{eq:estimate-EP-sym}
\end{align}
This result is proved in Lemma \ref{lma:lemma} of Appendix \ref{sec:DL}. Eq.~(\ref{eq:estimate-EP}) shows that the estimate $\widehat{\nabla}^{\rm{EP}}(\beta)$ possesses a first-order error term in $\beta$ which the symmetric estimate $\widehat{\nabla}^{\rm{EP}}_{\rm sym}(\beta)$ eliminates (Eq.~(\ref{eq:estimate-EP-sym})).
Note that the first-order term of $\widehat{\nabla}^{\rm{EP}}(\beta)$ could also be cancelled out on average by choosing the sign of $\beta$ at random with even probability (so that $\mathbb{E}(\beta) = 0$, see Alg.~\ref{alg:rndsign} of Appendix \ref{sec:appendOneSided}). 
Although not explicitly stated in this purpose, the use of such randomization has been reported in some earlier publications on the MNIST task \citep{scellier2017equilibrium,ernoult2020equilibrium}. However, in this work, we show that this method exhibits high variance in the training procedure. 
We call $\widehat{\nabla}^{\rm{EP}}(\beta)$ and $\widehat{\nabla}^{\rm{EP}}_{\rm sym}(\beta)$ the one-sided and symmetric EP gradient estimates respectively. The qualitative difference between these estimates is depicted on Fig.~\ref{fig:thirdphase} and the full training procedure is depicted in Alg.~\ref{alg:thirdphase} of Appendix \ref{sec:appendthirdphase}. Finally, this technique can also be applied to the Vector Field setting introduced in section~\ref{sec:vf} and we denote $\widehat{\nabla}^{\rm{VF}}_{\rm sym}(\beta)$ the resulting symmetric estimate --- see Appendix \ref{sec:appConvAsym} for details.

\subsection{Changing the loss function}
\label{sec:readout}

We introduce a novel architecture to optimize the cross-entropy loss with EP, narrowing the gap with conventional deep learning architectures for classification tasks. In the next paragraph, we denote $\widehat{y}$ the set of neurons that carries out the prediction of the neural network.

\begin{figure}[ht!]
  \centering
  \includegraphics[width=0.85\textwidth]{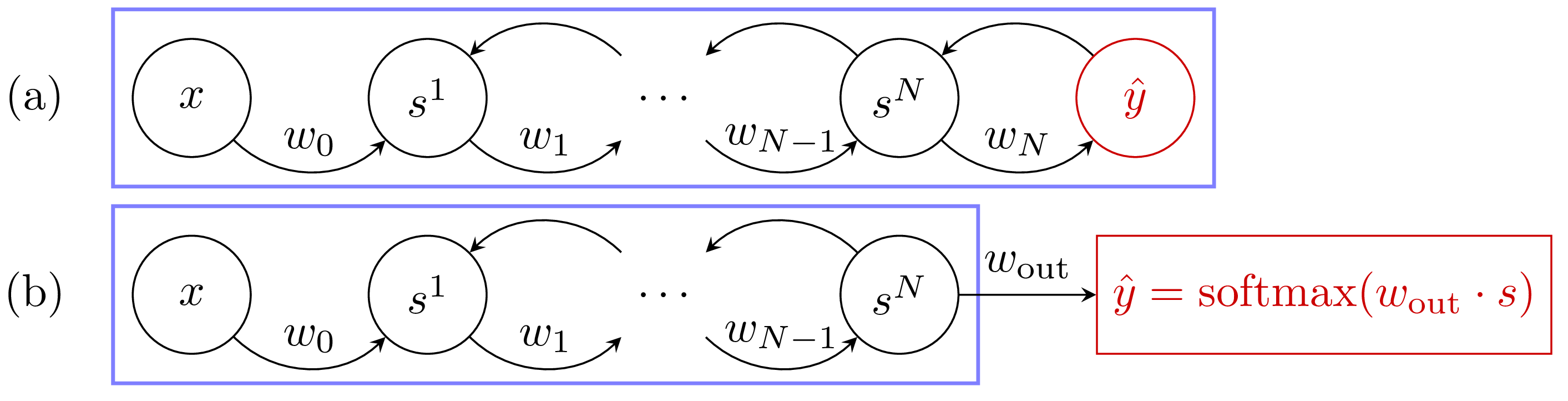}
  \caption{Free dynamics of the architectures used for the two loss functions where the blue frame delimits the system. \textbf{(a) Squared Error loss function.} The usual setting where the predictor $\hat{y}$ (in red) takes part in the free dynamics of the neural network through bidirectional synaptic connections. \textbf{(b)~Cross-Entropy loss function.} The new approach proposed in this work where the predictor $\hat{y}$ (also in red) is no longer involved in the system free dynamics and is implemented as a softmax readout.  
  }
  \label{fig:archi}
\end{figure}

\paragraph{Squared Error loss function.}
Previous implementations of EP used the squared error loss. Using this loss function for EP is natural, as in this setting, the output $\widehat{y}$ is viewed as a part of $s$ (the state variable of the network), which can influence the state of the network through bidirectional synaptic connections (see Fig.~\ref{fig:archi}). The state of the network is of the form $s=(s^1, \dots, s^N,\widehat{y})$ where $h=(s^1, \dots, s^N)$ represent the ``hidden layers'', and the corresponding cost function is
\begin{equation}
    \ell(\widehat{y},y) = \frac{1}{2} \left\| \widehat{y} - y \right\|^2.
\end{equation}

The second phase dynamics of the hidden state and output layer given by Eq.~(\ref{eq:secondphase}) read, in this context:
\begin{equation}
    \label{eq:elastic}
    h_{t+1}^{\beta} = \frac{\partial \Phi}{\partial h}(x, h_t^{\beta}, \widehat{y}_t^{\beta}, \theta), \qquad \widehat{y}_{t+1}^{\beta} = \frac{\partial \Phi}{\partial \widehat{y}}(x, h_t^{\beta}, \widehat{y}_t^{\beta}, \theta) + \beta \; (y - \widehat{y}_t^{\beta}).
\end{equation}

\paragraph{Softmax readout, Cross-Entropy loss function.}
In this paper, we propose an alternative approach, where the output $\widehat{y}$ is not a part of the state variable $s$ but is instead implemented as a read-out (see Fig.~\ref{fig:archi}), which is a function of $s$ and of a weight matrix $w_{\rm out}$ of size $\dim(y) \times \dim(s)$. In practice, $w_{\rm out}$ reads out the last convolutional layer. At each time step $t$ we define:
\begin{equation}
    \widehat{y}_t = \mbox{softmax}(w_{\rm out}\cdot s_t).
\end{equation}
The cross-entropy cost function associated with the softmax readout is then:
\begin{equation}
\label{eq:cross-entropy}
\ell(s,y,w_{\rm out}) = - \sum_{c=1}^C y_c \log(\textrm{softmax}_c(w_{\rm out} \cdot s)).
\end{equation}
Using $\frac{\partial \ell}{\partial s}(s,y,w_{\rm out}) = w_{\rm out}^\top \cdot \left( \textrm{softmax}(w_{\rm out} \cdot s) - y \right)$, the second phase dynamics given by Eq.~(\ref{eq:secondphase}) read in this context:
\begin{equation}
s_{t+1}^{\beta} = \frac{\partial \Phi}{\partial s}(x, s_{t}^{\beta}, \theta) + \beta \; w_{\rm out}^\top \cdot \left( y - \widehat{y}_t^\beta \right).
\end{equation}
Note here that the loss $\mathcal{L}^* = \ell(s_*,y,w_{\rm out})$ also depends on the parameter $w_{\rm out}$. Appendix~\ref{sec:appConvSym-CE} provides the learning rule applied to $w_{\rm out}$.

\subsection{Changing the learning rule of EP with asymmetric synaptic connections}
\label{sec:new-vf}

In the case of architectures with asymmetric connections, applying the traditional EP learning rule directly, as given by Eq.~(\ref{eq:vf-estimate}), prescribes different forward and backward weights updates, resulting in significantly different forward and backward weights throughout learning. However, the theoretical equivalence between EP and BPTT only holds for symmetric connections. Until now, training experiments of asymmetric weights EP have performed worse than symmetric weights EP \citep{ernoult2020equilibrium}. In this work, therefore, we tailor a new learning rule for asymmetric weights, described in detail Appendix~\ref{sec:appConvAsym}, where the forward and backward weights undergo the same weight updates, incorporating an equal leakage term. This way, forward and backward weights, although they are independently initialized, naturally converge to identical values throughout the learning process. A similar methodology, adapted from \citet{kolen1994backpropagation}, has been shown to improve the performance of Feedback Alignment in Deep ConvNets \citep{akrout2019deep}. 

Assuming general dynamics of the form of Eq.~(\ref{eq:general-dynamics}), we distinguish forward connections $\theta_{\rm f}$ from backward connections $\theta_{\rm b}$ so that $\theta = \{\theta_{\rm f}, \theta_{\rm b}\}$, with $\theta_{\rm f}$ and $\theta_{\rm b}$ having same dimension. Assuming a first phase, a second phase with $\beta > 0$ and a third phase with $-\beta$, we define:
\begin{equation}
\forall {\rm i \in \{ f,b \} }, \qquad \overline{\nabla_{\theta_{\rm i}}^{\rm VF}}(\beta) = \frac{1}{2 \beta} \left( \frac{\partial F}{\partial \theta_{\rm i}}^\top(x, s_*^\beta, \theta) \cdot s_*^\beta - \frac{\partial F}{\partial \theta_{\rm i}}^\top(x, s_*^{-\beta}, \theta)\cdot s_*^{-\beta} \right)
\end{equation}
and we propose the following update rules:
\begin{align}
\label{eq:new-vf}
\left\{
\begin{array}{l}
\displaystyle \Delta \theta_{\rm f} = \eta \left(\widehat{\nabla}^{\rm{KP-VF}}_{\rm sym}(\beta) - \lambda \theta_{\rm f}\right)\\
\displaystyle \Delta \theta_{\rm b} = \eta \left( \widehat{\nabla}^{\rm{KP-VF}}_{\rm sym}(\beta) - \lambda \theta_{\rm b}\right)
\end{array}
\right.,
\quad \mbox{with} \quad \widehat{\nabla}^{\rm{KP-VF}}_{\rm sym}(\beta) = \frac{1}{2}(\overline{\nabla_{\theta_{\rm f}}^{\rm VF}}(\beta)+\overline{\nabla_{\theta_{\rm b}}^{\rm VF}}(\beta))
\end{align}
where $\eta$ is the learning rate and $\lambda$ a leakage parameter. The estimate $\widehat{\nabla}^{\rm{KP-VF}}_{\rm sym}(\beta)$ can be thought of a generalization of Eq.~(\ref{eq:thirdphase}), as highlighted in Appendix \ref{sec:appConvAsym} with an explicit application of Eq.~(\ref{eq:new-vf}) to a ConvNet.

\section{Experimental results}

In this section, we implement EP with the modifications described in section \ref{sec:improveEP} and successfully train deep ConvNets on the CIFAR-10 vision task \citep{krizhevsky2009learning}. Our architectures consist of four convolutional layers of $3 \times 3$ kernels each followed by max pooling layers, and one fully connected layer. Implementation and model details are provided in Appendix~\ref{sec:appExpDetail} and \ref{sec:appConvArch} respectively.

\subsection{ConvNets with symmetric connections}
\begin{table}[ht!] 
\caption{Performance comparison on CIFAR-10 between BPTT and EP with several gradient estimation schemes. Note that the different gradient estimates only apply to EP. We indicate over ﬁve trials the mean and standard deviation in parenthesis for the test error, and the mean train error.}
\label{tab:results}
\centering
\begin{tabular}{llllll}
\cline{2-6}
                               & \multicolumn{3}{c}{Equilibrium Propagation Error (\%)}                                                 & \multicolumn{2}{c}{BPTT Error (\%)}                                  \\ \cline{2-6} 
\multicolumn{1}{c}{}           & \multicolumn{1}{c}{Gradient Estimate} & \multicolumn{1}{c}{Test}           & \multicolumn{1}{c}{Train} & \multicolumn{1}{c}{Test}                 & \multicolumn{1}{c}{Train} \\ \hline
\multirow{3}{*}{Squared Error} & One-sided                             & $86.64$ $(5.82)$                   & $84.90$                   & \multirow{3}{*}{$11.10$ $(0.21)$} & \multirow{3}{*}{$3.69$}   \\
                               & Random Sign                           & $21.55$ $(20.00)$                  & $20.01$                   &                                          &                           \\
                               & Symmetric                             & $12.45$  $(0.18)$                  & $7.83$                    &                                          &                           \\ \hline
Cross-Ent.                     & Symmetric                             & $\mathbf{11.68}$ $\mathbf{(0.17)}$ & $\mathbf{4.98}$           & $11.12$ $(0.21)$                         & $2.19$                    \\
Cross-Ent. (Dropout)           & Symmetric                             & $11.87$ $(0.29)$                   & $6.46$                    & $10.72$ $(0.06)$                         & $2.99$                    \\ \hline
\end{tabular}
\end{table}

We first consider the setting of section~\ref{sec:convRNN}, where bottom-up and top-down signals are carried by the same set of weights.
In Table~\ref{tab:results}, we compare the performance achieved by the ConvNet for each EP gradient estimate introduced in section~\ref{sec:improveEstimate} with the performance achieved by BPTT.

The one-sided gradient estimate leads to unstable training behavior where the network is unable to fit the data. When the bias in the gradient estimate is averaged out by choosing at random the sign of $\beta$ during the second phase, the average test error over five runs goes down to $21.55\%$. However, one run among the five yielded instability similar to the one-sided estimate, whereas the four remaining runs lead to $12.61\%$ test error and $8.64 \%$ train error. This method for estimating the loss gradient thus presents high variance --- further experiments shown in Appendix \ref{sec:appExpDetail} confirm this tendency. Conversely, the symmetric estimate enables EP to consistently reach $12.45\%$ test error, with only $1.35\%$ degradation with respect to BPTT. Therefore, removing the first-order error term in the gradient estimate is critical for scaling to deeper architectures. However, proceeding to this end deterministically (with three phases) rather than stochastically (with a randomized nudging sign) seems to be more reliable. 

The readout scheme introduced in section~\ref{sec:readout} to optimize the cross-entropy loss function enables EP to narrow the performance gap with BPTT down to $0.56\%$ while outperforming the Squared Error setting by $0.77\%$. Finally, we adapted dropout \citep{srivastava2014dropout} to convergent RNNs (see Appendix~\ref{sec:dropout} for implementation details) to see if the performance could be improved further. However, we can observe from Table~\ref{tab:results} that contrary to BPTT, the EP test error is not improved by adding a $0.1$ dropout probability in the neuron layer after the convolutions.

\subsection{ConvNets with asymmetric connections}
We investigate here the performance achieved by EP when the architecture uses distinct forward and backward weights, using a softmax readout. We find that the estimate $\widehat{\nabla}^{\rm{VF}}_{\rm sym}(\beta)$ leads to a poor performance with $75.47\%$ test-error and concomitantly observe that forward and backward weight do not align well. Conversely, when using our new estimate $\widehat{\nabla}^{\rm{KP-VF}}_{\rm sym}(\beta)$ defined in section~\ref{sec:new-vf}, a good performance is recovered with $1.5 \%$ performance degradation with respect to the architecture with symmetric connections, and a $3\%$ degradation with respect to BPTT (see Table~\ref{tab:asym}). In this case, forward and backward weights are perfectly aligned by epoch 50, as observed in the weight alignment curves in Fig.~\ref{fig:angle} of Appendix~\ref{sec:appAngle}. These results suggest that enhancing forward and backward weights alignment might also help EP training in deep ConvNets. 

\begin{table}[ht!]
\caption{CIFAR-10 results for asymmetric connections.}
\label{tab:asym}
\centering
\begin{tabular}{lcllll}
\cline{2-6}
                               & \multicolumn{3}{c}{Equilibrium Propagation Error (\%)}                                                   & \multicolumn{2}{c}{BPTT Error (\%)}                          \\ \cline{2-6} 
\multicolumn{1}{c}{}           & Gradient Estimate                       & \multicolumn{1}{c}{Test}           & \multicolumn{1}{c}{Train} & \multicolumn{1}{c}{Test}         & \multicolumn{1}{c}{Train} \\ \hline
\multirow{2}{*}{Cross-Entropy} & $\widehat{\nabla}^{\rm{VF}}_{\rm sym}$ & $75.47$ $(4.72)$                   & $78.04$                   & \multirow{2}{*}{$9.46$ $(0.17)$} & \multirow{2}{*}{$0.80$}   \\
                               & $\widehat{\nabla}^{\rm{KP-VF}}_{\rm sym}$   & $\mathbf{13.15}$ $\mathbf{(0.49)}$ & $8.87$                    &                                  &                           \\ \hline
\end{tabular}
\end{table}

\section{Discussion}
\label{sec:discussion}

In comparison with conventional implementations of EP, our results unveil the necessity to compute better gradient estimates in order to scale EP to deep ConvNets on hard visual tasks. Keeping the first order gradient estimate bias of EP, as done traditionally, severely impedes the training of these architectures and, conversely, removing it brings EP performance on CIFAR-10 close to the one achieved by BPTT.
While the test accuracy of our adapted EP and BPTT are very close, we can remark in Table~\ref{tab:results} that BPTT fits the training data better  than EP by at least $2.8\%$, and that dropout only improves BPTT performance. These two combined insights suggest that EP training may have a self-regularizing effect, similar to the effects of dropout, which we hypothesize to be due to the residual estimation bias of the BPTT gradients by EP.

Employing a new training technique that preserves the spatial locality of EP computations,  our results extend to the case of an architecture with distinct forward and backward synaptic connections. We  only observe a $1.5\%$ performance degradation with respect to the symmetric architecture. This result demonstrates the scalability of EP without the biologically implausible requirement of a symmetric connectivity pattern.

Our three steady states-based gradient estimate comes at a computational cost since one more phase is needed with regards to the conventional implementation. Even though the steady state of the free phase $s_*$ is not used to compute the gradient estimate in Eq.~(\ref{eq:thirdphase}), we experimentally find that $s_*$ is needed as a starting point for the second and third phases. 

In the longer run, the full potential of EP will be best envisioned on neuromorphic hardware \citep{ernoult2019updates, ernoult2020equilibrium, zoppo2020equilibrium}, which can sustain fast analog device physics and use them to implement the dynamics of EP intrinsically \citep{romera2018vowel, ambrogio2018equivalent}. Our prescription to run two nudging phases with opposite nudging strengths could be naturally implemented on such systems, which often function differentially to cancel device inherent biases \citep{bocquet2018memory}. Overall, our work provides evidence that EP is one compelling approach to scale neuromorphic on-chip training to real-world tasks in a fully local fashion.

\section*{Broader Impact}
This work may have a long-term impact on the design of energy-efficient hardware leveraging the physics of the device to perform learning. The demonstration that EP can scale to deep networks may also provide insights to neuroscientists to understand the mechanisms of credit assignment in the brain.
Due to the long term nature of this impact, the positive and negative outcomes of this work cannot yet be stated.

\section*{Acknowledgements}
The authors would like to thank Thomas Fischbacher for useful feedback and discussions. This work was supported by European Research Council Starting Grant NANOINFER (reference: 715872), European Research Council Grant bioSPINspired (reference: 682955), CIFAR, NSERC and Samsung.

\bibliographystyle{abbrvnat}
\bibliography{biblio}

\newpage
\appendix
\part*{Appendix}

\section{Gradients of BPTT}
\label{sec:BPTT}

In this appendix, we define $\nabla^{\rm{BPTT}}(t)$, the gradient computed by BPTT truncated to the last $t$ time steps ($T-t, \ldots, T$). To do this, let us rewrite Eq.~(\ref{eq:dynamics}) as $s_{t+1} = \frac{\partial \Phi}{\partial s}(x, s_{t}, \theta_t = \theta)$, where $\theta_t$ denotes the parameter at time step $t$, the value $\theta$ being shared across all time steps. We consider the loss after $T$ time steps $\mathcal{L} = \ell(s_T,y)$. Rewriting the dynamics in such a way enables us to define $\frac{\partial \mathcal{L}}{\partial \theta_t}$ as the sensitivity of the loss with respect to $\theta_t$, when $\theta_0, \ldots, \theta_{t-1}, \theta_{t+1}, \ldots, \theta_{T-1}$ remain fixed (set to the value $\theta$). With these notations, the gradient computed by BPTT truncated to the last $t$ time steps is
\begin{equation}
    \nabla^{\rm{BPTT}}(t) = \frac{\partial \mathcal{L}}{\partial \theta_{T-t}} + \ldots + \frac{\partial \mathcal{L}}{\partial \theta_{T-1}}.
\end{equation}

\section{Error terms in the estimates of the loss gradient}
\label{sec:DL}

In this appendix, we prove Lemma \ref{lma:lemma} which shows that $\widehat{\nabla}^{\rm{EP}}_{\rm sym}(\beta)$ is a better estimate of $- \frac{\partial \mathcal{L}^{*}}{\partial \theta}$ than $\widehat{\nabla}^{\rm{EP}}(\beta)$. First, we recall the theorem proved in \citet{scellier2017equilibrium}.
\begin{thm}[\citet{scellier2017equilibrium}]
\begin{equation}
\left. \frac{d}{d\beta} \right|_{\beta=0} \frac{\partial \Phi}{\partial \theta}(x, s_{*}^{\beta}, \theta) = - \frac{\partial \mathcal{L}^{*}}{\partial \theta}.
\end{equation}
\label{thm:main}
\end{thm}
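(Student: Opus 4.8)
The plan is to recast the discrete-time fixed-point condition as a stationarity condition for an augmented scalar function and then read off the identity from the symmetry of mixed second derivatives (a Maxwell-type relation), in the spirit of the energy-based proof of \citet{scellier2017equilibrium}. First I would observe that the nudged steady state $s_*^\beta$, characterized by $s_*^\beta = \frac{\partial \Phi}{\partial s}(x, s_*^\beta, \theta) - \beta \frac{\partial \ell}{\partial s}(s_*^\beta, y)$, is exactly a critical point in $s$ of the augmented primitive
\[
\widetilde{\Phi}(x, s, \theta, \beta) \defeq \Phi(x, s, \theta) - \tfrac{1}{2}\|s\|^2 - \beta\, \ell(s, y),
\]
since $\frac{\partial \widetilde{\Phi}}{\partial s} = \frac{\partial \Phi}{\partial s} - s - \beta \frac{\partial \ell}{\partial s}$ vanishes there. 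This is the discrete-time analogue of the fact that equilibria are critical points of the total energy $E + \beta C$ in the real-time formulation, and it is what lets the argument proceed.

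Next I would introduce the scalar $G(\theta, \beta) \defeq \widetilde{\Phi}(x, s_*^\beta, \theta, \beta)$, the value of the augmented primitive along the steady-state branch. By the envelope theorem — differentiating $G$ by the chain rule and discarding the term proportional to $\frac{\partial \widetilde{\Phi}}{\partial s}$, which vanishes at the critical point — I obtain the two clean expressions
\[
\frac{\partial G}{\partial \theta} = \frac{\partial \Phi}{\partial \theta}(x, s_*^\beta, \theta), \qquad \frac{\partial G}{\partial \beta} = -\,\ell(s_*^\beta, y),
\]
using that $-\tfrac{1}{2}\|s\|^2 - \beta\ell$ is independent of $\theta$. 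The crux is then Schwarz's theorem: provided $G$ is twice continuously differentiable, its mixed partials commute, giving $\frac{\partial}{\partial \beta}\frac{\partial \Phi}{\partial \theta}(x, s_*^\beta, \theta) = -\frac{\partial}{\partial \theta}\,\ell(s_*^\beta, y)$. Evaluating at $\beta = 0$, where $s_*^0 = s_*$ and $\ell(s_*^0, y) = \mathcal{L}^*$, yields the claimed identity.

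The main obstacle — and the only place genuine hypotheses enter — is justifying that $\beta \mapsto s_*^\beta$ (and $\theta \mapsto s_*^\beta$) is well-defined and smooth near $\beta = 0$, and that $G$ is $C^2$. This is supplied by the implicit function theorem applied to $\frac{\partial \widetilde{\Phi}}{\partial s} = 0$, which requires the Hessian $\frac{\partial^2 \widetilde{\Phi}}{\partial s^2}$ to be invertible at $(s_*, \beta=0)$, i.e.\ that $I$ is not an eigenvalue of $\frac{\partial^2 \Phi}{\partial s^2}(x, s_*, \theta)$. These are precisely the nondegeneracy conditions already needed for the steady states to exist and be isolated, so they are natural to assume here.

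As an alternative that exposes where symmetry is used, I could differentiate the two fixed-point equations directly. Writing $H = \frac{\partial^2 \Phi}{\partial s^2}(x, s_*, \theta)$, implicit differentiation gives $\frac{d s_*^\beta}{d\beta}\big|_0 = -(I - H)^{-1}\frac{\partial \ell}{\partial s}$ and $\frac{\partial s_*}{\partial \theta} = (I - H)^{-1}\frac{\partial^2 \Phi}{\partial s\,\partial \theta}$; substituting into the two sides reduces the claim to the equality $\frac{\partial^2\Phi}{\partial\theta\,\partial s}(I-H)^{-1}\frac{\partial \ell}{\partial s} = \frac{\partial \ell}{\partial s}(I-H)^{-1}\frac{\partial^2\Phi}{\partial s\,\partial\theta}$, which holds because $H$ (hence $(I-H)^{-1}$) is symmetric and $\frac{\partial^2\Phi}{\partial\theta\,\partial s} = \big(\frac{\partial^2\Phi}{\partial s\,\partial\theta}\big)^{\!\top}$. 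Either route makes clear that the result ultimately hinges on the symmetry of the second derivatives of $\Phi$, which is the algebraic heart of Equilibrium Propagation.
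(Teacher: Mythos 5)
Your proof is correct, but note that this paper never proves Theorem \ref{thm:main} itself: it is recalled from \citet{scellier2017equilibrium} and used as a black box in the proof of Lemma \ref{lma:lemma}, so there is no in-paper argument to compare against. What you have written is essentially a reconstruction of the original proof of \citet{scellier2017equilibrium}, correctly transposed from the real-time energy-based setting to the discrete-time primitive-function setting used here: your augmented primitive $\widetilde{\Phi} = \Phi - \tfrac{1}{2}\|s\|^2 - \beta \ell$ plays exactly the role of the total energy $E + \beta C$ in that paper (the nudged fixed point $s = \partial \Phi / \partial s - \beta\, \partial \ell / \partial s$ becomes a critical point of $\widetilde{\Phi}$, which is the key discrete-time adjustment), and the envelope-theorem-plus-Schwarz step is precisely their Maxwell-relation argument. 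Your second route, by direct implicit differentiation of the two fixed-point equations, is also sound and arguably more self-contained: it isolates the single algebraic fact the theorem rests on, namely the symmetry of $H = \partial^2 \Phi / \partial s^2$ and the transpose relation between the cross-derivative blocks, which is exactly what fails when forward and backward weights are untied (the Vector Field setting of Section \ref{sec:vf}), explaining why the theorem does not extend there. Two small points to tighten: the nondegeneracy condition should read ``$1$ is not an eigenvalue of $H$'' (equivalently, $I - H$ invertible), not ``$I$ is not an eigenvalue''; and the regularity hypotheses you invoke via the implicit function theorem are genuinely needed and implicit both here and in the original reference --- they are what underwrites the differentiability assumption on $\beta \mapsto \frac{\partial \Phi}{\partial \theta}(x, s_*^{\beta}, \theta)$ that this paper states explicitly in Lemma \ref{lma:lemma}, so flagging them is the right call.
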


We also recall that the two estimates (one-sided and symmetric) are, by definition:
\begin{align*}
\widehat{\nabla}^{\rm{EP}}(\beta) & \defeq \frac{1}{\beta} \left( \frac{\partial \Phi}{\partial \theta}(x, s_{*}^{\beta}, \theta) - \frac{\partial \Phi}{\partial \theta}(x, s_{*}, \theta) \right), \\
\widehat{\nabla}^{\rm{EP}}_{\rm sym}(\beta) & \defeq \frac{1}{2\beta} \left( \frac{\partial \Phi}{\partial \theta}(x, s_{*}^{\beta}, \theta) - \frac{\partial \Phi}{\partial \theta}(x, s_{*}^{-\beta}, \theta) \right).
\end{align*}

Finally we recall Lemma \ref{lma:lemma}, for readability.

\begin{restatable}{lma}{estimates}
\label{lma:lemma}
Provided the function $\beta \mapsto \frac{\partial \Phi}{\partial \theta}(x,s_{*}^{\beta}, \theta)$ is three times differentiable, we have, as $\beta \to 0$:
\begin{align*}
\widehat{\nabla}^{\rm{EP}}(\beta) &= - \frac{\partial \mathcal{L}^{*}}{\partial \theta} + \frac{\beta}{2} \left. \frac{d^2}{d\beta^2} \right|_{\beta=0} \frac{\partial \Phi}{\partial \theta}(s_{*}^{\beta}, \theta) + O(\beta^2),\\
\widehat{\nabla}^{\rm{EP}}_{\rm sym}(\beta) &= - \frac{\partial \mathcal{L}^{*}}{\partial \theta} + O(\beta^2).
\end{align*}
\end{restatable}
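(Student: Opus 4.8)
The plan is to reduce both identities to a single Taylor expansion of the auxiliary function $g(\beta) \defeq \frac{\partial \Phi}{\partial \theta}(x, s_*^\beta, \theta)$, viewed as a function of the scalar $\beta$ alone (with $x$ and $\theta$ held fixed). Since $s_*^0 = s_*$ we have $g(0) = \frac{\partial \Phi}{\partial \theta}(x, s_*, \theta)$, so the two estimates are precisely the forward and central finite-difference quotients of $g$:
\[
\widehat{\nabla}^{\rm{EP}}(\beta) = \frac{g(\beta) - g(0)}{\beta}, \qquad \widehat{\nabla}^{\rm{EP}}_{\rm sym}(\beta) = \frac{g(\beta) - g(-\beta)}{2\beta}.
\]
The hypothesis that $\beta \mapsto g(\beta)$ is three times differentiable is exactly what licenses a third-order Taylor expansion with a controlled remainder, so the entire argument is a finite-difference error analysis.

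First I would invoke Theorem \ref{thm:main}, which identifies the first derivative of $g$ at the origin as $g'(0) = -\frac{\partial \mathcal{L}^*}{\partial \theta}$. This is the only step where the EP-specific content enters; everything that follows is elementary calculus.

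Next I would expand $g$ about $\beta = 0$ to third order,
\[
g(\pm\beta) = g(0) \pm \beta\, g'(0) + \frac{\beta^2}{2} g''(0) \pm \frac{\beta^3}{6} g'''(0) + o(\beta^3),
\]
and treat the two estimates separately. For the one-sided estimate, inserting the $+\beta$ expansion into the forward quotient gives $\widehat{\nabla}^{\rm{EP}}(\beta) = g'(0) + \frac{\beta}{2} g''(0) + O(\beta^2)$; substituting $g'(0) = -\frac{\partial \mathcal{L}^*}{\partial \theta}$ and noting that $g''(0) = \left.\frac{d^2}{d\beta^2}\right|_{\beta=0} \frac{\partial \Phi}{\partial \theta}(s_*^\beta, \theta)$ recovers the first claimed expansion, exhibiting the first-order-in-$\beta$ bias term. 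For the symmetric estimate, the key observation is that subtracting the $-\beta$ expansion from the $+\beta$ one cancels the even-order terms $g(0)$ and $\frac{\beta^2}{2} g''(0)$, leaving
\[
g(\beta) - g(-\beta) = 2\beta\, g'(0) + \frac{\beta^3}{3} g'''(0) + o(\beta^3);
\]
dividing by $2\beta$ yields $\widehat{\nabla}^{\rm{EP}}_{\rm sym}(\beta) = g'(0) + O(\beta^2) = -\frac{\partial \mathcal{L}^*}{\partial \theta} + O(\beta^2)$, which is the second claim.

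I do not expect a genuine obstacle: the statement is the textbook fact that a central difference is one order more accurate than a forward difference, because the symmetric combination annihilates the even Taylor coefficients. The only point requiring care is the bookkeeping of the remainder --- three-times differentiability supplies a remainder of order $\beta^3$ in $g$, which becomes $O(\beta^2)$ after the central difference is divided by $2\beta$. This confirms that the stated $O(\beta^2)$ is the correct order and that the symmetric estimate indeed removes the first-order bias of $\widehat{\nabla}^{\rm{EP}}(\beta)$.
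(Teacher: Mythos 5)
Your proposal is correct and follows essentially the same route as the paper's own proof: define the auxiliary scalar function $f(\beta) = \frac{\partial \Phi}{\partial \theta}(x, s_*^{\beta}, \theta)$, use Theorem~\ref{thm:main} to identify $f'(0) = -\frac{\partial \mathcal{L}^*}{\partial \theta}$, and then read off both claims from the Taylor expansions of $f$ at $\pm\beta$, with the even-order terms cancelling in the central difference. The only cosmetic difference is that you carry the explicit $\frac{\beta^3}{6}f'''(0)$ term where the paper absorbs it into an $O(\beta^3)$ remainder; the content is identical.
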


\begin{proof}[Proof of Lemma \ref{lma:lemma}]
Let us define
\begin{equation*}
    f(\beta) \defeq \frac{\partial \Phi}{\partial \theta}(x,s_{*}^{\beta}, \theta).
\end{equation*}
The formula of Theorem \ref{thm:main} rewrites
\begin{equation*}
    f'(0) = - \frac{\partial \mathcal{L}^{*}}{\partial \theta}.
\end{equation*}
As $\beta \to 0$, we have the Taylor expansion
\begin{equation}
\label{eq:taylor:pos}
f(\beta) = f(0) + \beta f'(0) + \frac{\beta^2}{2} f''(0) +  O(\beta^3).
\end{equation}
With these notations, the one-sided estimate reads
\begin{align*}
\widehat{\nabla}^{\rm{EP}}(\beta) & = \frac{1}{\beta} \left( f(\beta) - f(0) \right) \\
& = f'(0) + \frac{\beta}{2} f''(0) + O(\beta^2) \\
& = - \frac{\partial \mathcal{L}^{*}}{\partial \theta} + \frac{\beta}{2} \left. \frac{d^2}{d\beta^2} \right|_{\beta=0} \frac{\partial \Phi}{\partial \theta}(x, s_{*}^{\beta}, \theta)  + O(\beta^2).
\end{align*}

We can also write a Taylor expansion around $0$ at the point $-\beta$. We have
\begin{equation}
\label{eq:taylor:neg}
f(-\beta) = f(0) - \beta f'(0) + \frac{\beta^2}{2} f''(0) +  O(\beta^3).
\end{equation}
Subtracting Eq.~\ref{eq:taylor:neg} from Eq.~\ref{eq:taylor:pos}, we can rewrite the symmetric difference estimate as
\begin{align*}
\widehat{\nabla}^{\rm{EP}}_{\rm sym}(\beta) & = \frac{1}{2 \beta} \left( f(\beta) - f(-\beta) \right) \\
& = f'(0) +O(\beta^2) \\
& = - \frac{\partial \mathcal{L}^{*}}{\partial \theta} + O(\beta^2).
\end{align*}
The derivative to the third order of $f$ is only used to get the $O(\beta^3)$ term in the expansion Eq.~(\ref{eq:taylor:pos}), it can be changed into $o(\beta^2)$ if we only assume $f$ twice differentiable.
\end{proof}

\section{Pseudo code}
\label{sec:appCode}

\subsection{Random one-sided estimation of the loss gradient}
\label{sec:appendOneSided}

In this appendix, we define the random one-sided estimation used in this work and by \citet{scellier2017equilibrium,ernoult2020equilibrium}.

\begin{algorithm}[H]{\emph{Input}: $x$, $y$, $\theta$, $\eta$.  \\
\emph{Output}: $\theta$.}
    \caption{EP with random one-sided estimation of the loss gradient. We omit the activation function $\sigma$ for clarity.}\label{alg:rndsign}
    \begin{algorithmic}[1]
        \State $s_0 \gets 0$
        \For{$t=0$ to $T$} \Comment{First phase.}
        \State $s_{t+1} \gets \frac{\partial \Phi}{\partial s} (x, s_t, \theta)$
        \EndFor
        \State $s_* \gets s_T$
        \State $\beta \gets \beta \times  {\rm{Bernoulli}(1, -1)}$ \Comment{Random sign.}
        \State $s_{0}^{\beta} \gets s_*$
        \For{$t=0$ to $K$} \Comment{Second phase.}
        \State $s_{t+1}^{\beta} \gets \frac{\partial \Phi}{\partial s} (x, s_t^\beta, \theta) - \beta \frac{\partial \ell}{\partial s}(s_{t}^{\beta}, y)$
        \EndFor
        \State $s_{*}^{\beta} \gets s_{K}^{\beta}$
        \State $\nabla_{\theta}^{\rm{EP}} \gets \frac{1}{\beta} \left( \frac{\partial \Phi}{\partial \theta}(s_{*}^{\beta}, \theta) - \frac{\partial \Phi}{\partial \theta}(s_{*}, \theta) \right)$
        \State $\theta \gets \theta + \eta \nabla_{\theta}^{\rm{EP}}$ \\
        \Return $\theta$
    \end{algorithmic}
\end{algorithm}

\subsection{Symmetric difference estimation of the loss gradient}
\label{sec:appendthirdphase}

In this appendix, we define the estimation procedure using a symmetric difference estimate introduced in this work.

\begin{algorithm}[H]{\emph{Input}: $x$, $y$, $\theta$, $\eta$.  \\
\emph{Output}: $\theta$.}
    \caption{EP with symmetric difference estimation of the loss gradient. We omit the activation function $\sigma$ for clarity.}\label{alg:thirdphase}
        \begin{algorithmic}[1]
        \State $s_0 \gets 0$
        \For{$t=0$ to $T$} 
        \State $s_{t+1} \gets \frac{\partial \Phi}{\partial s} (x, s_t, \theta)$ \Comment{First phase.}
        \EndFor
        \State $s_* \gets s_T$ \Comment{Store the free steady state.}
        \State $s_{0}^{\beta} \gets s_*$ 
        \For{$t=0$ to $K$} 
        \State $s_{t+1}^{\beta} \gets \frac{\partial \Phi}{\partial s} (x, s_{t}^{\beta}, \theta) - \beta \frac{\partial \ell}{\partial s}(s_{t}^{\beta}, y)$ \Comment{Second phase.}
        \EndFor
        \State $s_{*}^{\beta} \gets s_{K}^{\beta}$
        \State $s_{0}^{-\beta} \gets s_*$ \Comment{Back to the free steady state.}
        \For{$t=0$ to $K$}
        \State $s_{t+1}^{-\beta} \gets \frac{\partial \Phi}{\partial s} (x, s_{t}^{-\beta}, \theta) + \beta \frac{\partial \ell}{\partial s}(s_{t}^{-\beta}, y)$ \Comment{Third phase.}
        \EndFor
        \State $s_{*}^{-\beta} \gets s_{K}^{-\beta}$
        \State $\widehat{\nabla}_{\theta}^{\rm{EP}} \gets \frac{1}{2\beta} \left( \frac{\partial \Phi}{\partial \theta}(s_{*}^{\beta}, \theta) - \frac{\partial \Phi}{\partial \theta}(s_{*}^{-\beta}, \theta) \right)$
        \State $\theta \gets \theta + \eta \widehat{\nabla}_{\theta}^{\rm{EP}}$ \\
        \Return $\theta$
    \end{algorithmic}
\end{algorithm}

\section{Convolutional RNNs}
\label{sec:appConvArch}

Throughout this section, $N^{\rm conv}$ and $N^{\rm fc}$ denote respectively the number of convolutional layers and fully connected layers in the convolutional RNN, and $N^{\rm tot} \defeq N^{\rm conv} + N^{\rm fc}$. The neuron layers are denoted by $s$ and range from $s^{0}=x$ the input to the output $s^{N^{\rm tot}}$ in the case of squared error, or $s^{N^{\rm tot}-1}$ in the case of softmax read-out. 

\subsection{Definition of the operations}
\label{sec:appConvOP}
In this subsection we detail the operations involved in the dynamics of a convolutional RNN.

\begin{itemize}
    \item The 2-D convolution between $w$ with dimension $(C_{\rm out}, C_{\rm in}, F,F)$ and an input $x$ of dimensions $(C_{\rm in}, H_{\rm in}, W_{\rm in})$ and stride one is a tensor $y$ of size $(C_{\rm out}, H_{\rm out}, W_{\rm out})$ defined by:
    \begin{equation}
        y_{c,h,w} = (w \star x)_{c,h,w} = B_c + \sum_{i=0}^{C_{\rm in}-1} \sum_{j=0}^{F-1}\sum_{k=0}^{F-1} w_{c,i,j,k}x_{i,j+h,k+w}, 
    \end{equation}
    where $B_c$ is a channel-wise bias.
    \item The 2-D transpose convolution of $y$ by $\tilde{w}$ is then defined in this work as the gradient of the 2-D convolution with respect to its input:
    \begin{equation}
        (\tilde{w} \star y) \defeq \frac{\partial (w \star x)}{\partial x}\cdot y
    \end{equation}
    \item The dot product ``$\bullet$'' generalized to pairs of tensors of same shape $(C,H,W)$:
    \begin{equation}
        a \bullet b = \sum_{c=0}^{C-1}\sum_{h=0}^{H-1}\sum_{w=0}^{W-1} a_{c,h,w}b_{c,h,w}.
    \end{equation}
    \item The pooling operation $\mathcal{P}$ with stride $F$ and filter size $F$ of $x$:
    \begin{equation}
        \mathcal{P}_{F}(x)_{c,h,w} = \underset{i,j \in [0,F-1]}{\rm max}  \left\{ x_{c, F(h-1)+1+i, F(w-1)+1+j} \right\},
    \end{equation}
    with relative indices of maximums within each pooling zone given by:
    \begin{equation}
        {\rm ind}_{\mathcal{P}}(x)_{c,h,w} = \underset{i,j \in [0,F-1]}{\rm argmax}  \left\{ x_{c, F(h-1)+1+i, F(w-1)+1+j} \right\} = (i^{*}(x,h), j^{*}(x,w)).
    \end{equation}
    \item The unpooling operation $\mathcal{P}^{-1}$ of $y$ with indices ${\rm ind}_{\mathcal{P}}(x)$ is then defined as:
    \begin{equation}
        \mathcal{P}^{-1}(y, {\rm ind}_{\mathcal{P}}(x))_{c,h,w} = \sum_{i,j} y_{c,i,j}\cdot \delta_{h, F(i-1)+1+i^{*}(x,h)} \cdot \delta_{w, F(j-1)+1+j^{*}(x,w)},
    \end{equation}
    which consists in filling a tensor with the same dimensions as $x$  with the values of $y$ at the indices ${\rm ind}_{\mathcal{P}}(x)$, and zeroes elsewhere. For notational convenience, we omit to write explicitly the dependence on the indices except when appropriate. 
    \item The flattening operation $\mathcal{F}$ is defined as reshaping a tensor of dimensions $(C,H,W)$ to $(1, CHW)$. We denote by $\mathcal{F}^{-1}$ its inverse.
\end{itemize}

\subsection{Convolutional RNNs with symmetric connections}
\label{sec:appConvSym}

In this section, we write explicitly the dynamics and the learning rules applied for the convolutional architecture with symmetric connections, for the Squared loss function and the Cross-Entropy loss function, for the one-sided and symmetric estimates.

\subsubsection{Squared Error loss}
\label{sec:appConvSym-SE}
\paragraph{Equations of the dynamics.} In this case, the dynamics read:

\begin{align}
\label{eq:conv-archi-sym}
\left\{
\begin{array}{l}
\displaystyle s^{n+1}_{t+1} = \sigma \left( \mathcal{P}(w_{n+1} \star s^{n}_{t}) + \tilde{w}_{n+2} \star \mathcal{P}^{-1}(s^{n+2}_{t}) \right), \qquad \forall n \in [0, N^{\rm conv}-2] \\
\displaystyle s^{N^{\rm conv}}_{t+1} = \sigma \left( \mathcal{P}(w_{N^{\rm conv}} \star s^{N^{\rm conv}-1}_{t}) + \mathcal{F}^{-1}({w_{N^{\rm conv}+1}}^{\top} \cdot s^{N^{\rm conv}+1}_{t}) \right), \\
\displaystyle s^{N^{\rm conv} + 1}_{t+1} = \sigma \left( w_{N^{\rm conv} + 1} \cdot \mathcal{F}(s^{N^{\rm conv}}_{t}) + {w_{N^{\rm conv} + 2}}^{\top} \cdot s^{N^{\rm conv} + 2}_{t} \right), \\
\displaystyle s_{t+1}^{n+1} = \sigma \left( w_{n+1} \cdot s^{n}_{t} + {w_{n+2}}^{\top} \cdot s^{n+2}_{t} \right), \qquad \forall n \in [N^{\rm conv} + 1, N^{\rm tot}-2] \\
\displaystyle s_{t+1}^{N^{\rm tot}} = \sigma \left( w_{N^{\rm tot}} \cdot s^{N^{\rm tot}-1}_{t}\right) + \beta(y - s^{N^{\rm tot}}), \quad \text{with $\beta=0$ during the first phase,}
\end{array}
\right.
\end{align}
where we take the convention $s^{0}=x$. In this case, we have $\hat{y} = s_{t+1}^{N^{\rm tot}}$.
Considering the function:

\begin{align*}
    \Phi(x, s^{1}, \cdots, s^{N^{\rm tot}}) &= 
    \sum_{n= N_{\rm conv} + 2}^{N_{\rm tot} - 1} s^{{n + 1}^\top}\cdot w_{n}\cdot s^{n}
    + s^{N_{\rm conv} + 1}\cdot w_{N_{\rm conv} + 1}\cdot \mathcal{F}(s_{t}^{N_{\rm conv}})\\
    &+\sum_{n = 1}^{N_{\rm conv} - 1} s^{n + 1}\bullet\mathcal{P}\left(w_{n + 1}\star s^{n}\right) + s^{1}\bullet\mathcal{P}\left(w_{1}\star x\right), 
\end{align*}

when ignoring the activation function, we have:

\begin{equation}
\label{eq:dphids-SE}
\forall n \in [1, N^{\rm tot}]: \quad s_t^n \approx  \frac{\partial \Phi}{\partial s^n}.
\end{equation}

Note that in the case of the Squared Error loss function, the dynamics of the output layer derive from $\Phi$ as it can be seen by Eq.~(\ref{eq:dphids-SE}). 

\paragraph{Learning rules for the one-sided EP estimator.} In this case, the learning rules read:

\begin{align}
   \left\{
\begin{array}{l}
\forall n \in [N_{\rm conv} + 2, N_{\rm tot} - 1]: \quad \Delta w_{n}  =
\frac{1}{\beta}\left(s_{*}^{n + 1, \beta}\cdot s_{*}^{{n, \beta}^\top} - s_{*}^{n + 1}\cdot s_{*}^{{n}^\top}  \right) \\
\Delta w_{N_{\rm conv}+ 1}  =
\frac{1}{\beta}\left(s_{*}^{N_{\rm conv} + 1, \beta}\cdot \mathcal{F}\left(s^{N_{\rm conv}, \beta}_{*}\right)^\top - s_{*}^{N_{\rm conv} + 1}\cdot \mathcal{F}\left(s^{N_{\rm conv}}_{*}\right)^\top \right) \\
\forall n \in [1, N_{\rm conv} - 1]:\quad \Delta w_{n + 1}  =  \frac{1}{\beta} \left(\mathcal{P}^{-1}(s^{n + 1, \beta}_{*})\star s^{n, \beta}_{*} - \mathcal{P}^{-1}(s^{n + 1}_{*})\star s^{n}_{*} \right)\\
\Delta w_1  =  \frac{1}{\beta} \left(\mathcal{P}^{-1}(s^{1, \beta}_{*})\star x - \mathcal{P}^{-1}(s^{1}_{*})\star x \right)  
\end{array}, 
\right. 
\label{deltaconv-sym-SE}
\end{align}

\paragraph{Learning rules for the symmmetric EP estimator.} In this case, the learning rules read:

\begin{align}
   \left\{
\begin{array}{l}
\forall n \in [N_{\rm conv} + 2, N_{\rm tot} - 1]: \quad \Delta w_{n}  =
\frac{1}{2\beta}\left(s_{*}^{n + 1, \beta}\cdot s_{*}^{{n, \beta}^\top} - s_{*}^{n + 1, -\beta}\cdot s_{*}^{{n, -\beta}^\top}  \right) \\
\Delta w_{N_{\rm conv}+ 1}  =
\frac{1}{2\beta}\left(s_{*}^{N_{\rm conv} + 1, \beta}\cdot \mathcal{F}\left(s^{N_{\rm conv}, \beta}_{*}\right)^\top - s_{*}^{N_{\rm conv} + 1, -\beta}\cdot \mathcal{F}\left(s^{N_{\rm conv}, -\beta}_{*}\right)^\top \right) \\
\forall n \in [1, N_{\rm conv} - 1]:\quad \Delta w_{n + 1}  =  \frac{1}{2\beta} \left(\mathcal{P}^{-1}(s^{n + 1, \beta}_{*})\star s^{n, \beta}_{*} - \mathcal{P}^{-1}(s^{n + 1, -\beta}_{*})\star s^{n, -\beta}_{*} \right)\\
\Delta w_1  =  \frac{1}{2\beta} \left(\mathcal{P}^{-1}(s^{1, \beta}_{*})\star x - \mathcal{P}^{-1}(s^{1, -\beta}_{*})\star x \right)  
\end{array}, 
\right. 
\label{deltaconv-sym-SE2}
\end{align}

\subsubsection{Cross-Entropy loss}
\label{sec:appConvSym-CE}
\paragraph{Equations of the dynamics.} In this case, the dynamics read:

\begin{align}
\label{eq:conv-archi-sym-softmax}
\left\{
\begin{array}{l}
\displaystyle s^{n+1}_{t+1} = \sigma \left( \mathcal{P}(w_{n+1} \star s^{n}_{t}) + \tilde{w}_{n+2} \star \mathcal{P}^{-1}(s^{n+2}_{t}) \right), \qquad \forall n \in [0, N^{\rm conv}-2] \\
\displaystyle s^{N^{\rm conv}}_{t+1} = \sigma \left( \mathcal{P}(w_{N^{\rm conv}} \star s^{N^{\rm conv}-1}_{t}) + \mathcal{F}^{-1}({w_{N^{\rm conv}+1}}^{\top} \cdot s^{N^{\rm conv}+1}_{t}) \right), \\
\displaystyle s^{N^{\rm conv} + 1}_{t+1} = \sigma \left( w_{N^{\rm conv} + 1} \cdot \mathcal{F}(s^{N^{\rm conv}}_{t}) + {w_{N^{\rm conv} + 2}}^{\top} \cdot s^{N^{\rm conv} + 2}_{t} \right), \\
\displaystyle s_{t+1}^{n+1} = \sigma \left( w_{n+1} \cdot s^{n}_{t} + {w_{n+2}}^{\top} \cdot s^{n+2}_{t} \right), \qquad \forall n \in [N^{\rm conv} + 1,N^{\rm tot}-3] \\
\displaystyle s_{t+1}^{N^{\rm tot}-1} = \sigma \left( w_{N^{\rm tot}-1} \cdot s^{N^{\rm tot}-2}_{t}\right) + \beta {w_{\rm out}}^{\top} \cdot (y - \hat{y}) \quad \text{with $\beta=0$ during the first phase,}\\
\displaystyle \hat{y} = {\rm softmax}(w_{\rm out} \cdot s^{N^{\rm tot}-1}_t),
\end{array}
\right.
\end{align}

where we keep again the convention $s^0 = x$. Considering the function:

\begin{align*}
    \Phi(x, s^{1}, \cdots, s^{N^{\rm tot} - 1}) &= 
    \sum_{n= N_{\rm conv} + 1}^{N_{\rm tot} - 2} s^{{n + 1}^\top}\cdot w_{n}\cdot s^{n}
    + s^{N_{\rm conv} + 1}\cdot w_{N_{\rm conv} + 1}\cdot \mathcal{F}(s_{t}^{N_{\rm conv}})\\
    &+\sum_{n = 1}^{N_{\rm conv} - 1} s^{n + 1}\bullet\mathcal{P}\left(w_{n + 1}\star s^{n}\right) + s^{1}\bullet\mathcal{P}\left(w_{1}\star x\right), 
\end{align*}

when ignoring the activation function, we have:

\begin{equation}
\label{eq:dphids-CE}
 \forall n \in [1, N^{\rm tot} - 1]: \quad s_t^n \approx  \frac{\partial \Phi}{\partial s^n}, \qquad  \hat{y} = {\rm softmax}(w_{\rm out} \cdot s^{N^{\rm tot}-1}_t). \\
\end{equation}

Note that in this case and contrary to the Squared Error loss function, the dynamics of the output layer do not derive from the primitive function $\Phi$, as it can be seen from Eq.~(\ref{eq:dphids-CE})

\paragraph{Learning rules for the one-sided EP estimator.} In this case, the learning rules read:

\begin{align}
   \left\{
\begin{array}{l}
\Delta w_{\rm out} = -\left( \widehat{y}_*^\beta - y \right)\cdot s_*^{\beta, N^\top}. \\
\forall n \in [N_{\rm conv} + 2, N_{\rm tot} - 2]: \quad \Delta w_{n}  =
\frac{1}{\beta}\left(s_{*}^{n + 1, \beta}\cdot s_{*}^{{n, \beta}^\top} - s_{*}^{n + 1}\cdot s_{*}^{{n}^\top}  \right) \\
\Delta w_{N_{\rm conv}+ 1}  =
\frac{1}{\beta}\left(s_{*}^{N_{\rm conv} + 1, \beta}\cdot \mathcal{F}\left(s^{N_{\rm conv}, \beta}_{*}\right)^\top - s_{*}^{N_{\rm conv} + 1}\cdot \mathcal{F}\left(s^{N_{\rm conv}}_{*}\right)^\top \right) \\
\forall n \in [1, N_{\rm conv} - 1]:\quad \Delta w_{n + 1}  =  \frac{1}{\beta} \left(\mathcal{P}^{-1}(s^{n + 1, \beta}_{*})\star s^{n, \beta}_{*} - \mathcal{P}^{-1}(s^{n + 1}_{*})\star s^{n}_{*} \right)\\
\Delta w_1  =  \frac{1}{\beta} \left(\mathcal{P}^{-1}(s^{1, \beta}_{*})\star x - \mathcal{P}^{-1}(s^{1}_{*})\star x \right)  
\end{array}.
\right. 
\label{deltaconv}
\end{align}

\paragraph{Learning rules for the symmetric EP estimator.} In this case, the learning rules read:

\begin{align}
   \left\{
\begin{array}{l}
\Delta w_{\rm out} = -\frac{1}{2}\left(\left( \widehat{y}_*^\beta - y \right)\cdot s_*^{\beta, N^\top} + \left( \widehat{y}_*^{-\beta} - y \right)\cdot s_*^{-\beta, N^\top}\right). \\
\forall n \in [N_{\rm conv} + 2, N_{\rm tot} - 2]: \quad \Delta w_{n}  =
\frac{1}{2\beta}\left(s_{*}^{n + 1, \beta}\cdot s_{*}^{{n, \beta}^\top} - s_{*}^{n + 1, -\beta}\cdot s_{*}^{{n, -\beta}^\top}  \right) \\
\Delta w_{N_{\rm conv}+ 1}  =
\frac{1}{2\beta}\left(s_{*}^{N_{\rm conv} + 1, \beta}\cdot \mathcal{F}\left(s^{N_{\rm conv}, \beta}_{*}\right)^\top - s_{*}^{N_{\rm conv} + 1, -\beta}\cdot \mathcal{F}\left(s^{N_{\rm conv, -\beta}}_{*}\right)^\top \right) \\
\forall n \in [1, N_{\rm conv} - 1]:\quad \Delta w_{n + 1}  =  \frac{1}{2\beta} \left(\mathcal{P}^{-1}(s^{n + 1, \beta}_{*})\star s^{n, \beta}_{*} - \mathcal{P}^{-1}(s^{n + 1, -\beta}_{*})\star s^{n, -\beta}_{*} \right)\\
\Delta w_1  =  \frac{1}{2\beta} \left(\mathcal{P}^{-1}(s^{1, \beta}_{*})\star x - \mathcal{P}^{-1}(s^{1, -\beta}_{*})\star x \right)  
\end{array}.
\right. 
\label{deltaconv2}
\end{align}

\subsubsection{Implementation details in PyTorch.} The equation of the dynamics as well as the EP estimates computation can be expressed as derivatives of the primitive function $\Phi$. 
Therefore, the automatic differentiation framework provided by PyTorch can be leveraged to implement implicitly the equations of the dynamics and the EP estimates computation by differentiating $\Phi$. 
Although this implementation is slower than explicitly implementing the equations of the dynamics, it is more flexible in terms of network architecture as $\Phi$ is relatively easy to compute.

\subsection{Convolutional RNNs with asymmetric connections}
\label{sec:appConvAsym}
In this section, we write the explicit definition of the dynamics and the learning rule of a convolutional architecture with asymmetric connections where forward and backward connections are no longer constrained to be equal-valued. In this setting, we use the Cross-Entropy loss function along with a softmax readout to implement the output layer of the network. 

\paragraph{Equations of the dynamics.}
In this setting, the dynamics Eq.~(\ref{eq:conv-archi-sym-softmax}) have simply to be changed into:

\begin{align}
\label{eq:conv-archi-asym-softmax}
\left\{
\begin{array}{l}
\displaystyle s^{n+1}_{t+1} = \sigma \left( \mathcal{P}(w^{\rm f}_{n+1} \star s^{n}_{t}) + \tilde{w}_{n+2}^{\rm b} \star \mathcal{P}^{-1}(s^{n+2}_{t}) \right), \qquad \forall n \in [0, N^{\rm conv}-2] \\
\displaystyle s^{N^{\rm conv}}_{t+1} = \sigma \left( \mathcal{P}(w^{\rm f}_{N^{\rm conv}} \star s^{N^{\rm conv}-1}_{t}) + \mathcal{F}^{-1}({w^{\rm b}_{N^{\rm conv}+1}}^{\top} \cdot s^{1}_{t}) \right), \\
\displaystyle s^{N^{\rm conv} + 1}_{t+1} = \sigma \left( w_{N^{\rm conv} + 1}^{\rm f} \cdot \mathcal{F}(s^{N^{\rm conv}}_{t}) + {w_{N^{\rm conv} + 2}}^{\rm b^{\top}} \cdot s^{N^{\rm conv} + 2}_{t} \right), \\
\displaystyle s_{t+1}^{n+1} = \sigma \left( w^{\rm f}_{n+1} \cdot s^{n}_{t} + {w^{\rm b}_{n+2}}^{\top} \cdot s^{n+2}_{t} \right), \qquad \forall n \in [N^{\rm conv} + 1, N^{\rm tot}-3] \\
\displaystyle s_{t+1}^{N^{\rm tot}-1} = \sigma \left( w^{\rm f}_{N^{\rm tot}-1} \cdot s^{N^{\rm tot}-2}_{t}\right) + \beta {w_{\rm out}}^{\top} \cdot (y - \hat{y}),\\
\displaystyle \hat{y} = {\rm softmax}(w_{\rm out} \cdot s^{N^{\rm tot}-1}_t),
\end{array}
\right.
\end{align}

where we distinguish now between forward and backward connections: $w^{\rm f}_n \neq w^{\rm b}_n \quad \forall n \in [1, N_{\rm tot} - 2]$. 

\paragraph{Original Vector Field learning rule (VF).}

The symmetric version of the original Vector Field learning rule is defined as:

\begin{equation}
\label{eq:sym-vf-estimate}
 \widehat{\nabla}^{\rm{VF}}_{\rm sym}(\beta) \defeq \frac{1}{2\beta} \frac{\partial F}{\partial \theta}(x, s_*, \theta)^\top \cdot \left(s^\beta_* - s^{-\beta}_*\right),
\end{equation}

which yields in the case of softmax read-out:

\begin{align}
   \left\{
\begin{array}{l}
\Delta w_{\rm out} = -\frac{1}{2}\left(\left( \widehat{y}_*^\beta - y \right)\cdot s_*^{\beta, N^\top} + \left( \widehat{y}_*^{-\beta} - y \right)\cdot s_*^{-\beta, N^\top}\right). \\
\forall n \in [N_{\rm conv} + 2, N_{\rm tot} - 2]: \quad \Delta w_{n}^{\rm f}  =
\frac{1}{2\beta} \left(s_{*}^{n + 1, \beta}-s_{*}^{n + 1, -\beta} \right)\cdot s_{*}^{{n}^\top} \\
\forall n \in [N_{\rm conv} + 2, N_{\rm tot} - 2]: \quad \Delta w_{n}^{\rm b}  =
\frac{1}{2\beta} s_{*}^{{n+1}} \cdot \left(s_{*}^{n, \beta}-s_{*}^{n, -\beta} \right)^{\top} \\
\Delta w_{N_{\rm conv}+ 1}^{\rm f}  =
\frac{1}{2\beta}\left(s_{*}^{N_{\rm conv} + 1, \beta} - s_{*}^{N_{\rm conv} + 1, -\beta}\right) \cdot  \mathcal{F}\left(s^{N_{\rm conv}}_{*}\right)^\top \\
\Delta w_{N_{\rm conv}+ 1}^{\rm b}  =
\frac{1}{2\beta} s_{*}^{N_{\rm conv} + 1} \cdot \left(\mathcal{F}\left(s^{N_{\rm conv}, \beta}_{*}\right) -  \mathcal{F}\left(s^{N_{\rm conv}, -\beta}_{*}\right) \right)^{\top}  \\
\forall n \in [1, N_{\rm conv} - 1]:\quad \Delta w_{n + 1}^{\rm f}  =  \frac{1}{2\beta} \left(\mathcal{P}^{-1}(s^{n + 1, \beta}_{*}) - \mathcal{P}^{-1}(s^{n + 1, -\beta}_{*}) \right) \star s^{n}_{*}\\
\forall n \in [1, N_{\rm conv} - 1]:\quad \Delta w_{n + 1}^{\rm b}  =  \frac{1}{2\beta} \mathcal{P}^{-1}(s^{n + 1}_{*})\star \left( s^{n, \beta}_{*} -  s^{n, -\beta}_{*} \right)\\
\Delta w_1  =  \frac{1}{2\beta} \left(\mathcal{P}^{-1}(s^{1, \beta}_{*})\star x - \mathcal{P}^{-1}(s^{1, -\beta}_{*})\star x \right)  
\end{array}.
\right. 
\label{deltaconvAsym}
\end{align}

Importantly, note that $\Delta w^{\rm f}_n \neq \Delta w^{\rm b}_n \quad \forall n \in [1, N_{\rm tot} - 2]$.

\paragraph{Kolen-Pollack algorithm.} When forward and backward weights have a common gradient estimate, and a weight decay term $\lambda$, they converge to the same values. We recall the proof, noting $t$ the iteration step and taking the notations of section \ref{sec:new-vf}, the update rule follows:

\begin{align*}
\left\{
\begin{array}{l}
\displaystyle \theta_{\rm f}(t+1) = \theta_{\rm f}(t) + \Delta \theta_{\rm f} \\
\displaystyle \theta_{\rm b}(t+1) = \theta_{\rm b}(t) + \Delta \theta_{\rm b}
\end{array}
\right..
\end{align*}

We can then write

\begin{align*}
    \theta_{\rm f}(t+1) - \theta_{\rm b}(t+1) &= \theta_{\rm f}(t) - \theta_{\rm b}(t) + \Delta \theta_{\rm f} - \Delta \theta_{\rm b} \\
    &= \theta_{\rm f}(t) - \theta_{\rm b}(t) - \eta \lambda \left( \theta_{\rm f}(t) - \theta_{\rm b}(t) \right) \\
    &= (1 - \eta \lambda) \left( \theta_{\rm f}(t) - \theta_{\rm b}(t) \right),
\end{align*}

where we use the fact that the estimates are the same for both parameters, such that they cancel out. Then by recursion :

\begin{align*}
    \theta_{\rm f}(t) - \theta_{\rm b}(t) &= (1 - \eta \lambda)^{t} \left( \theta_{\rm f}(0) - \theta_{\rm b}(0) \right) \underset{t \to \infty}{\rightarrow} 0, \quad \text{since} \quad |1 - \eta \lambda| < 1.
\end{align*}

\paragraph{Kolen-Pollack Vector Field learning rule (KP-VF).} We remind here that the new learning rule proposed in this paper to train convNets with asymmetric connections is defined as:

\begin{align}
\label{eq:KP-VF-update}
\left\{
\begin{array}{l}
\displaystyle \Delta \theta_{\rm f} = \eta \left(\widehat{\nabla}^{\rm{KP-VF}}_{\rm sym}(\beta) - \lambda \theta_{\rm f}\right)\\
\displaystyle \Delta \theta_{\rm b} = \eta \left( \widehat{\nabla}^{\rm{KP-VF}}_{\rm sym}(\beta) - \lambda \theta_{\rm b}\right)
\end{array}
\right.,
\quad \mbox{with} \quad \widehat{\nabla}^{\rm{KP-VF}}_{\rm sym}(\beta) = \frac{1}{2}(\overline{\nabla_{\theta_{\rm f}}^{\rm VF}}(\beta)+\overline{\nabla_{\theta_{\rm b}}^{\rm VF}}(\beta)),
\end{align}

where:

\begin{equation}
\label{eq:KP-VF}
\forall {\rm i \in \{ f,b \} }, \qquad \overline{\nabla_{\theta_{\rm i}}^{\rm VF}}(\beta) = \frac{1}{2 \beta} \left( \frac{\partial F}{\partial \theta_{\rm i}}^\top(x, s_*^\beta, \theta) \cdot s_*^\beta - \frac{\partial F}{\partial \theta_{\rm i}}^\top(x, s_*^{-\beta}, \theta)\cdot s_*^{-\beta} \right).
\end{equation}

More specifically, applying Eq.~(\ref{eq:KP-VF}) to Eq.~(\ref{eq:conv-archi-asym-softmax}) yields:

\begin{align}
   \left\{
\begin{array}{l}
\forall n \in [N_{\rm conv} + 2, N_{\rm tot} - 2]: \\ \qquad \qquad \qquad  \overline{\nabla_{w_{n}^{\rm f}}^{\rm VF}}(\beta) = \overline{\nabla_{w_{n}^{\rm b}}^{\rm VF}}(\beta) =
\frac{1}{2\beta} \left(s_{*}^{n + 1, \beta}\cdot s_{*}^{{n, \beta}^\top}-s_{*}^{n + 1, -\beta}\cdot s_{*}^{{n, -\beta}^\top} \right) \\
\overline{\nabla_{w_{N_{\rm conv}+ 1}^{\rm f}}^{\rm VF}}(\beta)= \overline{\nabla_{w_{N_{\rm conv}+ 1}^{\rm b}}^{\rm VF}}(\beta) = \\
\qquad \qquad \qquad  \frac{1}{2\beta}\left(s_{*}^{N_{\rm conv} + 1, \beta}\cdot  \mathcal{F}\left(s^{N_{\rm conv}, \beta}_{*}\right)^\top - s_{*}^{N_{\rm conv} + 1, -\beta}\cdot  \mathcal{F}\left(s^{N_{\rm conv}, -\beta}_{*}\right)^\top\right) \\
\forall n \in [1, N_{\rm conv} - 1]: \\ \overline{\nabla_{w_{n + 1}^{\rm f}}^{\rm VF}}(\beta)=  \frac{1}{2\beta} \left(\mathcal{P}^{-1}\left(s^{n + 1, \beta}_{*}, {\rm ind}_{\mathcal{P}}\left(w_{n + 1}^{\rm f}\star s^{n, \beta}_{*}\right)\right)\star s^{n, \beta}_{*}\right. \\
\qquad \qquad \qquad \qquad \qquad \qquad \qquad \left.- \mathcal{P}^{-1}\left(s^{n + 1, -\beta}_{*} , {\rm ind}_{\mathcal{P}}\left(w_{n + 1}^{\rm f}\star s^{n, -\beta}_{*}\right)\right)\star s^{n, -\beta}_{*} \right)\\
\forall n \in [1, N_{\rm conv} - 1]: \\ \overline{\nabla_{w_{n + 1}^{\rm b}}^{\rm VF}}(\beta) =  \frac{1}{2\beta} \left(\mathcal{P}^{-1}\left(s^{n + 1, \beta}_{*} , {\rm ind}_{\mathcal{P}}\left(w_{n + 1}^{\rm b}\star s^{n, \beta}_{*}\right)\right)\star s^{n, \beta}_{*} \right.\\
\qquad \qquad \qquad \qquad \qquad \qquad \qquad \left.  - \mathcal{P}^{-1}\left(s^{n + 1, -\beta}_{*}, {\rm ind}_{\mathcal{P}}\left(w_{n + 1}^{\rm b}\star s^{n, -\beta}_{*}\right)\right)\star s^{n, -\beta}_{*} \right)\\
\end{array}.
\right. 
\label{eq:nabla-KPVF}
\end{align}

Combining Eqs.~(\ref{eq:nabla-KPVF}) with Eq.~(\ref{eq:KP-VF-update}) gives the associated parameter updates. The updates for $w_1$ and $w_{\rm out}$ are the same than those of Eq.~(\ref{deltaconvAsym}).
Importantly, note that while $\forall n \in [N_{\rm conv} + 1, N_{\rm tot} - 2]: \quad  \overline{\nabla_{w_{n}^{\rm f}}^{\rm VF}}(\beta) = \overline{\nabla_{w_{n}^{\rm b}}^{\rm VF}}(\beta)$, we have $\forall n \in [1, N_{\rm conv} - 1]: \quad  \overline{\nabla_{w_{n}^{\rm f}}^{\rm VF}}(\beta) \neq \overline{\nabla_{w_{n}^{\rm b}}^{\rm VF}}(\beta)$ because of inverse pooling. In other words, the updates of the convolutional filters do not solely depend on the pre and post synaptic activations but also on the location of the maximal elements within each pooling window, itself depending on the filter considered. Hence the motivation to average $\overline{\nabla_{w_{n}^{\rm f}}^{\rm VF}}(\beta) $ and $\overline{\nabla_{w_{n}^{\rm b}}^{\rm VF}}(\beta)$ and use this quantity to update to $w_{n}^{\rm b}$ and $w_{n}^{\rm f}$ and apply the Kolen-Pollack technique. 

\paragraph{Implementation details in PyTorch.} The dynamics in the case of asymmetric connections does not derive from a primitive function $\Phi$. Therefore, it is not possible to implicitly get the dynamics by differentiating one primitive function.
A way around is to get the asymmetric dynamics by differentiating one quantity $\tilde{\Phi}^n$ by layer. This quantity is not a primitive function and is especially designed to get the right equations once differentiated. We define $\tilde{\Phi}^{n}(w_{n}^{\rm f}, w_{n+1}^{\rm b}, s^{n-1}, s^{n})$ by:

\begin{align}
   \left\{
\begin{array}{l}
\forall n \in [1, N_{\rm conv} - 1]: \tilde{\Phi}^n = s^n \bullet \mathcal{P}(w_{n}^{\rm f} \star s^{n - 1} ) + s^{n+1} \bullet \mathcal{P}(w_{n+1}^{\rm b}\star s^n) \\
\tilde{\Phi}^{N_{\rm conv}} = s^{N_{\rm conv}} \bullet \mathcal{P}(w_{N_{\rm conv}}^{\rm f} \star s^{N_{\rm conv} - 1} ) + s^{N_{\rm conv}+1} \cdot w_{N_{\rm conv}+1}^{\rm b}\cdot \mathcal{F}(s^{N_{\rm conv}}) \\
\forall n \in [N_{\rm conv}+1, N_{\rm tot} - 1]: \tilde{\Phi}^n = s^n \cdot w_{n}^{\rm f} \cdot s^{n - 1}  + s^{n+1} \cdot w_{n+1}^{\rm b}\cdot s^n \\
\tilde{\Phi}^{N_{\rm tot}-1} = s^{N_{\rm tot}-1} \cdot w_{N_{\rm tot}-1}^{\rm f} \cdot s^{N_{\rm tot} - 2} + \beta \ell( s^{N_{\rm tot}-1}, y, w_{\rm out})
\end{array},
\right.
\label{eq:cheatedPhi}
\end{align}
where $\ell$ is defined by Eq.~(\ref{eq:cross-entropy}), and $\beta=0$ in the first phase. Then, $\forall n \in [1, N^{\rm tot}-1]$, the dynamics of Eq.~(\ref{eq:conv-archi-asym-softmax}) read:

\begin{equation}
    s^{n}_{t+1} = \sigma \left( \frac{\partial \tilde{\Phi}^n}{\partial s^{n}}(w_{n}^{\rm f}, w_{n+1}^{\rm b}, s^{n-1}_{t}, s^{n}_{t})\right).
\end{equation}

The original VF update of Eq.~(\ref{deltaconvAsym}) can be written as $\forall n \in [1, N^{\rm tot}-1], \forall {\rm i} \in \{ {\rm f},{\rm b} \}$:

\begin{equation}
    \Delta w_{n}^{\rm i} = \frac{1}{2\beta}\left( \frac{\partial \tilde{\Phi}^n}{\partial w_{n}^{\rm i}}(s^{n, \beta}_*, s^{n-1}_*) - \frac{\partial \tilde{\Phi}^n}{\partial w_{n}^{\rm i}}(s^{n, -\beta}_*, s^{n-1}_*)\right), 
\end{equation}

and Eq.~(\ref{eq:nabla-KPVF}) as:

\begin{equation}
    \overline{\nabla_{w_{n}^{\rm i}}^{\rm VF}}(\beta) = \frac{1}{2\beta}\left( \frac{\partial \tilde{\Phi}^n}{\partial w_{n}^{\rm i}}(s^{n, \beta}_*, s^{n-1, \beta}_*) - \frac{\partial \tilde{\Phi}^n}{\partial w_{n}^{\rm i}}(s^{n, -\beta}_*, s^{n-1, -\beta}_*)\right). 
\end{equation}

\section{Experimental details}
\label{sec:appExpDetail}

\begin{figure}[ht!]
  \label{fig:mse}
  \centering
  \includegraphics[width=0.8\textwidth]{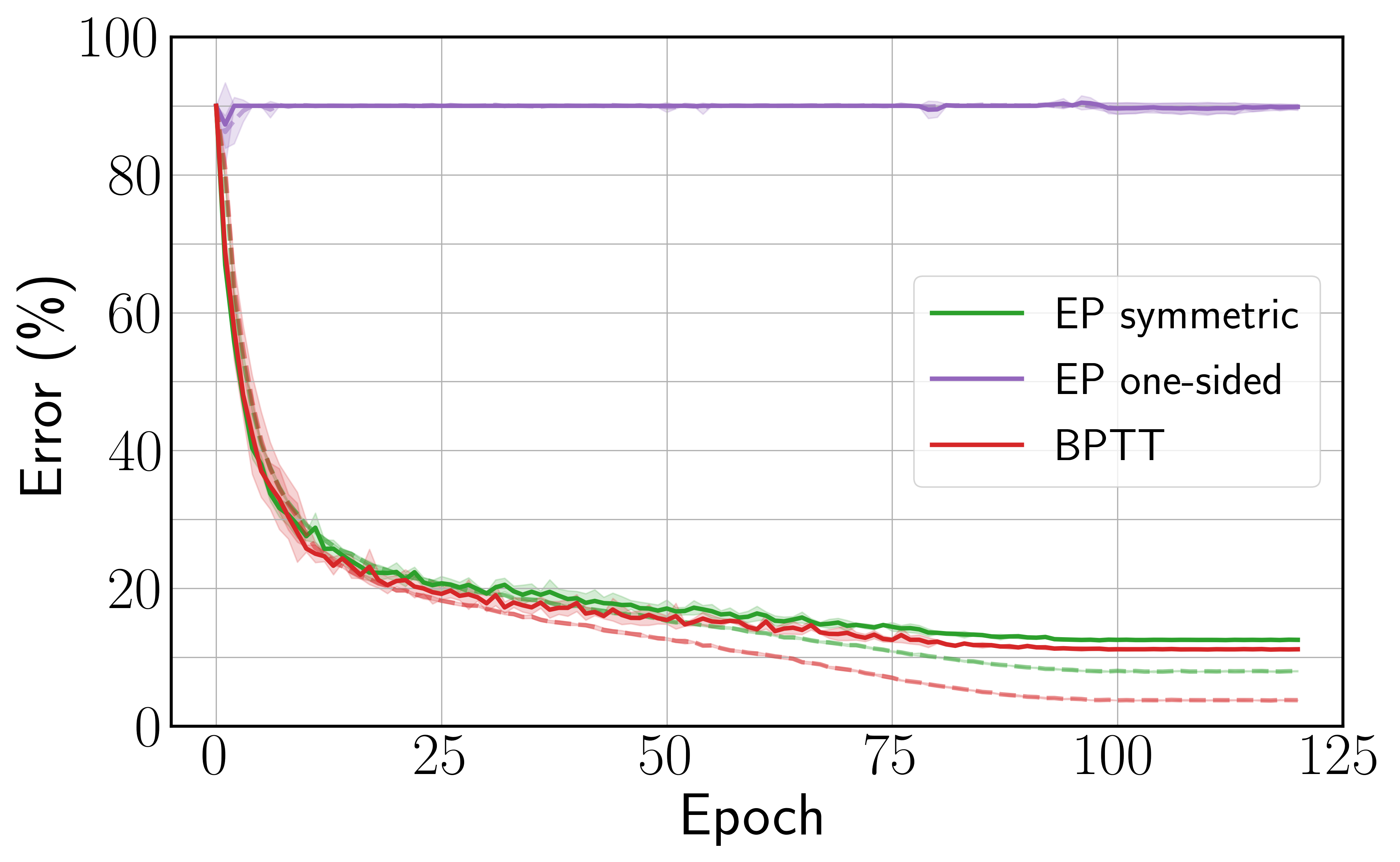}
  \caption{Train (dashed) and test (solid) errors on CIFAR-10 with the Squared Error loss function. The curves are averaged over 5 runs and shadows stand for $\pm 1$ standard deviation.}
\end{figure}

\begin{figure}[ht!]
  \label{fig:cel}
  \centering
  \includegraphics[width=0.8\textwidth]{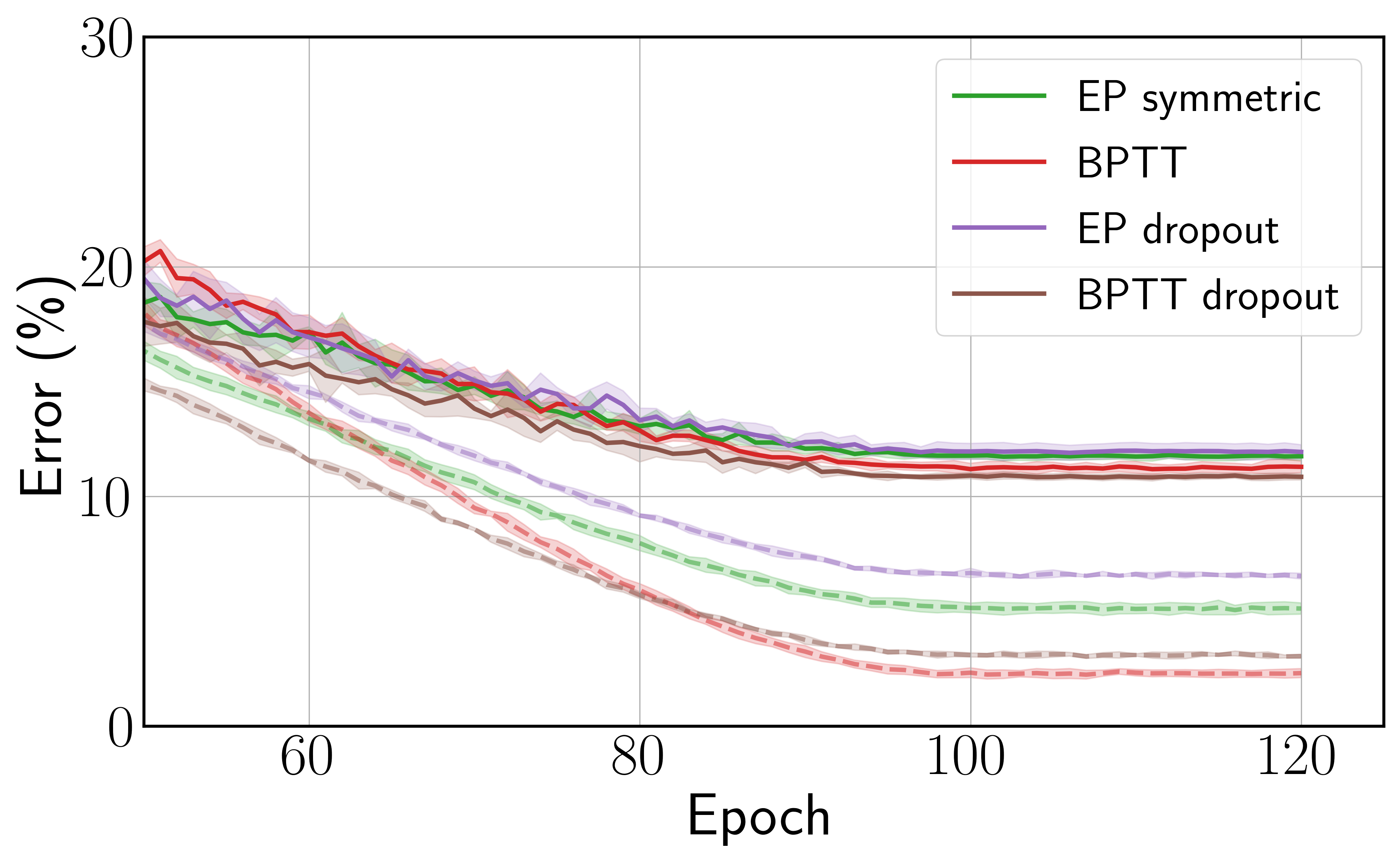}
  \caption{Train (dashed) and test (solid) errors on CIFAR-10 with the Cross-Entropy loss function. The curves are averaged over 5 runs and shadows stand for $\pm 1$ standard deviation.}
\end{figure}

\subsection{Environment and hyper-parameters}
The experiments are run using PyTorch 1.4.0 and torchvision 0.5.0. \citep{paszke2017automatic}. 
The convolutional architecture used in the CIFAR-10 experiment consists of four $3\times3$ convolutional layers of respective feature maps 128 - 256 - 512 - 512. 
We use a stride of one for each convolutional layer, and zero-padding of one for each layer except for the last layer. 
Each layer is followed by a $2\times2$ Max Pooling operation with a stride of two.
The resulting flattened feature vector is of size 512.
The weights are initialized using the default initialization of PyTorch, which is the uniform Kaiming initialization introduced by \citet{he2015delving}.
The data is normalized and augmented with random horizontal flips and random crops.
The training is performed with stochastic gradient descent with momentum and weight decay.
We use the learning rate scheduler introduced by \citet{loshchilov2016sgdr} to speed up convergence.
The simulations were carried across several servers consisting of 14 GPUs in total.
Each run was performed on a single GPU for an average run time of 2 days.

For the asymmetric architecture, the backward weights are defined for all convolutional layers except the first convolutional layer connected to the static input. 
The forward and backward weights are initialized independently at the beginning of training.
The backward weights have no bias contrary to their forward counterparts. 
The hyper-parameters such as learning rate, weight decay and momentum are shared between forward and backward weights.

\begin{table}[ht!]
\caption{Hyper-parameters used for the CIFAR-10 experiments.}
\label{tab:hyperparam}
\centering
\begin{tabular}{ccc}
\hline
Hyper-parameter                                                                  & Squared Error                             & Cross-Entropy                   \\ \hline
$T$                                                                             & 250                             & 250                             \\
$K$                                                                             & 30                              & 25                              \\
$\beta$                                                                         & 0.5                             & 1.0                             \\
Batch Size                                                                      & 128                             & 128                             \\
\begin{tabular}[c]{@{}c@{}}Initial learning rates\\ (Layer-wise)\end{tabular}   & 0.25 - 0.15 - 0.1 - 0.08 - 0.05 & 0.25 - 0.15 - 0.1 - 0.08 - 0.05 \\
Final learning rates                                                            & $10^{-5}$                       & $10^{-5}$                       \\
\begin{tabular}[c]{@{}c@{}}Weight decay\\ (All layers)\end{tabular}             & $3 \cdot 10^{-4}$               & $3 \cdot 10^{-4}$               \\
Momentum                                                                        & 0.9                             & 0.9                             \\
Epoch                                                                           & 120                             & 120                             \\
\begin{tabular}[c]{@{}c@{}}Cosine Annealing \\ Decay time (epochs)\end{tabular} & 100                             & 100                             \\ \hline
\end{tabular}
\end{table}

\subsection{Random-sign estimate variance}

The results presented in Table \ref{tab:results} consists of five runs. 
In the case of the EP random-sign estimate, one run among the five collapses to random guess similar to the one-sided estimate.
In order to test the frequency of such a phenomenon, we performed another five runs with both symmetric and random-sign estimates. 
The results for each run presented in Table \ref{tab:additionalRuns} show that two trials among ten are unstable, confirming further the high variance nature of the random-sign estimate.

\begin{figure}[ht!]
  \label{fig:rnd-sign}
  \centering
  \includegraphics[width=0.8\textwidth]{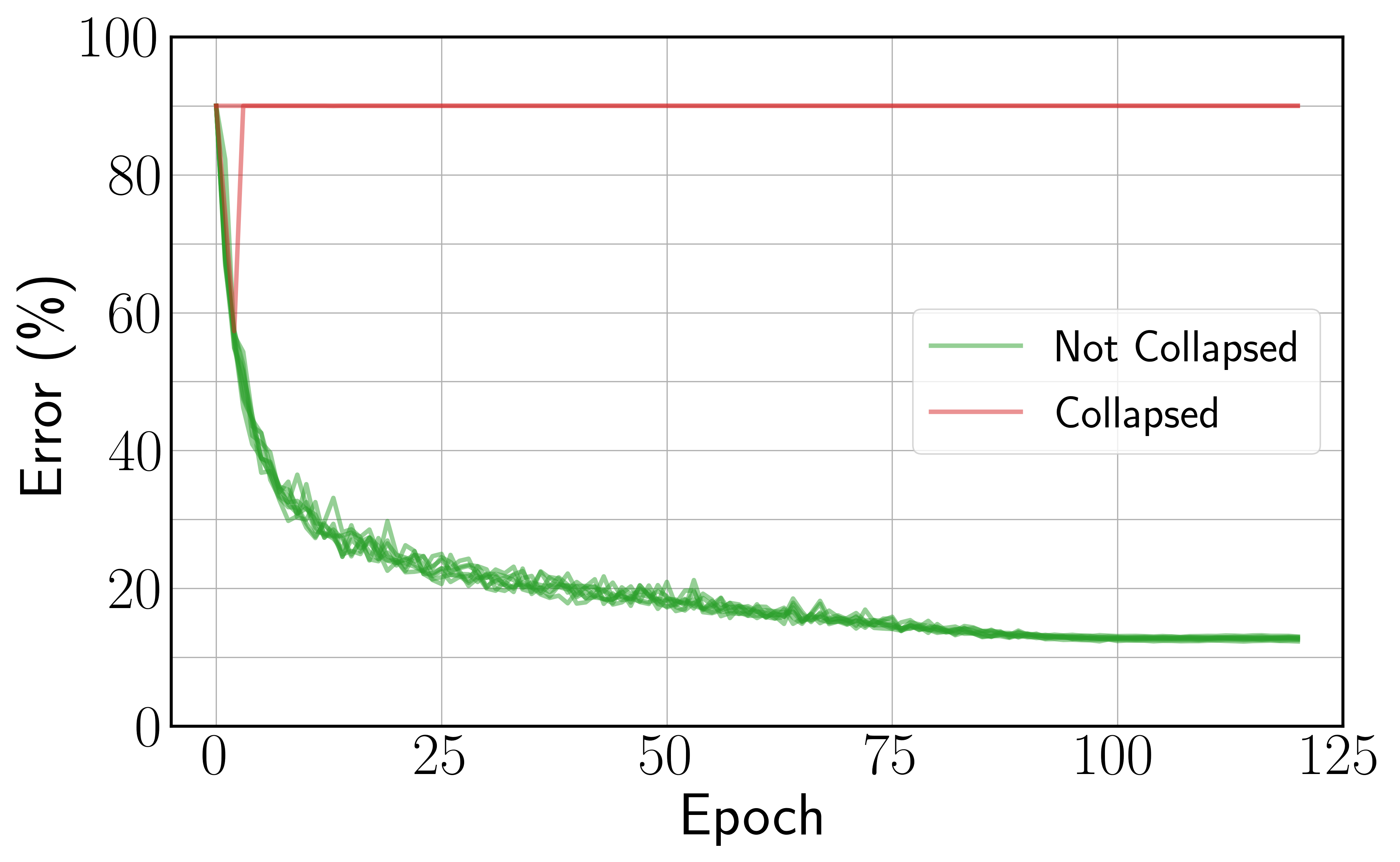}
  \caption{Test error curve of each run with the Squared Error loss function and random-sign estimate. The two collapsed runs among the ten trials are steady to 90\% because in such cases the network typically outputs the same class for each data point.}
\end{figure}

\begin{table}[ht!]
\caption{Best test error comparison between random-sign and symmetric estimates, for ten runs.}
\label{tab:additionalRuns}
\centering
\begin{tabular}{ccc}
\hline
Run index    & EP random-sign   & EP symmetric     \\ \hline
$1$          & $12.97$          & $12.24$          \\
$2$          & $12.72$          & $12.31$          \\
$3$          & $12.30$          & $12.68$          \\
$4$          & $12.45$          & $12.43$          \\
$5$          & $12.78$          & $12.57$          \\
$6$          & $12.66$          & $12.55$          \\
$7$          & $12.84$          & $12.44$          \\
$8$          & $12.59$          & $12.52$          \\
$9$          & $57.32$          & $12.85$          \\
$10$         & $89.98$          & $12.60$          \\ \hline
Mean         & $24.86$          & $\mathbf{12.52}$ \\ \hline
w/o collapse & $\mathbf{12.66}$ & N.A              \\ \hline
\end{tabular}
\end{table}

\subsection{Adding dropout}
\label{sec:dropout}
We adapt dropout \citep{srivastava2014dropout} for convergent RNNs by shutting some units to zero with probability $p<1$ when computing $\Phi(x, s_t, \theta)$. We multiply the remaining active units by the factor $\frac{1}{1-p}$ to keep the same neural activity on average, so that the learning rule is rescaled by $\left(\frac{1}{1-p}\right)^2$. 
The dropped out units are the same within one training iteration but they differ across the examples of one mini batch.
In our experiments we use $p=0.1$ on the last convolutional layer before the linear classifier.
The results are reported in Table~\ref{tab:results}.

\subsection{Changing the activation function}
\label{sec:appActivation}
Previous implementations of EP used a shifted hard sigmoid activation function:
\begin{equation}
    \sigma(x) = \max(0, \min(x, 1)).
    \label{eq:act-former}
\end{equation}
In their experiments with ConvNets on MNIST, \citet{ernoult2019updates} observed saturating units that cannot pass error signals during the second phase. 
In this work, to mitigate this effect, we have rescaled by a factor $1/2$ the slope of the activation function to ease signal propagation and prevent saturation, therefore changing Eq.~(\ref{eq:act-former}) into:
\begin{equation}
    \sigma(x) = \max\left(0, \min\left(\frac{x}{2}, 1\right)\right).
    \label{eq:act-now}
\end{equation}

\section{Weight alignment for asymmetric connections}
\label{sec:appAngle}
The angle $\alpha$ between forward and backward weights is defined as :
\begin{equation}
    \alpha = \frac{180}{\pi}{\rm Acos}\left(\frac{w^{\rm b} \bullet w^{\rm f}}{\|w^{\rm b}\| \|w^{\rm f}\|}\right), \quad \text{where} \quad \|w\| = \sqrt{w \bullet w}.
\end{equation}
Fig.~\ref{fig:angle} shows the angle between forward and backward weights during training on CIFAR-10 for both the original VF learning rule (dashed) and the new learning rule inspired by \citet{kolen1994backpropagation}. 

\begin{figure}[ht!]
  \centering
  \includegraphics[width=0.8\textwidth]{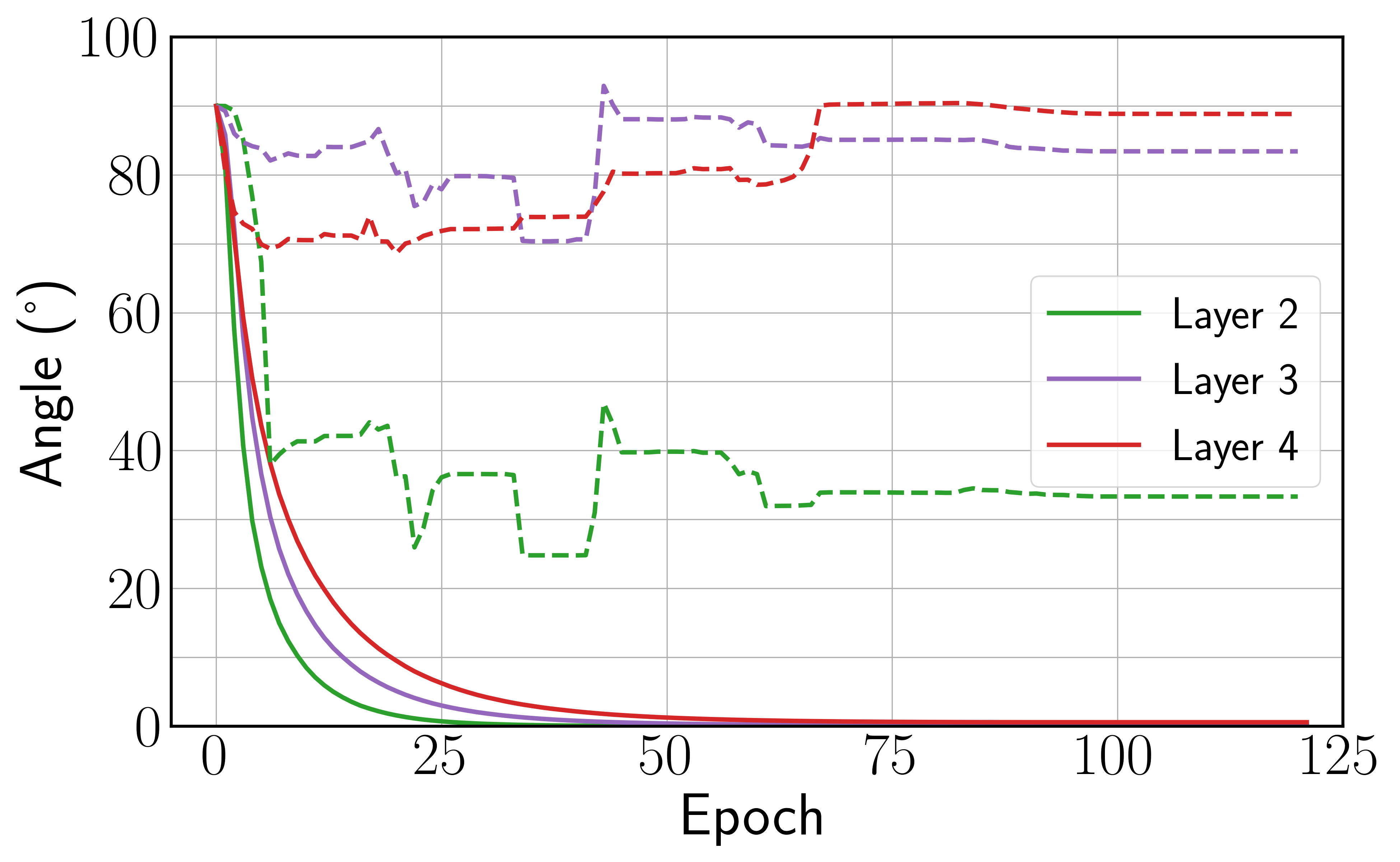}
  \caption{Angle $\alpha$ between forward and backward weights for the new estimate $\widehat{\nabla}^{\rm{KP-VF}}_{\rm sym}$ introduced (solid) and $\widehat{\nabla}^{\rm{VF}}_{\rm sym}$ (dashed).
  }
  \label{fig:angle}
\end{figure}

\section{Layer-wise comparison of EP estimates}
\label{sec:appCompEstimate}

In this section we show on Fig.~\ref{fig:morecurve} more instances of Fig.~\ref{fig:thirdphase} for each layer of the convolutional architecture.

\begin{figure}[ht!]
  \centering
  \includegraphics[width=\textwidth]{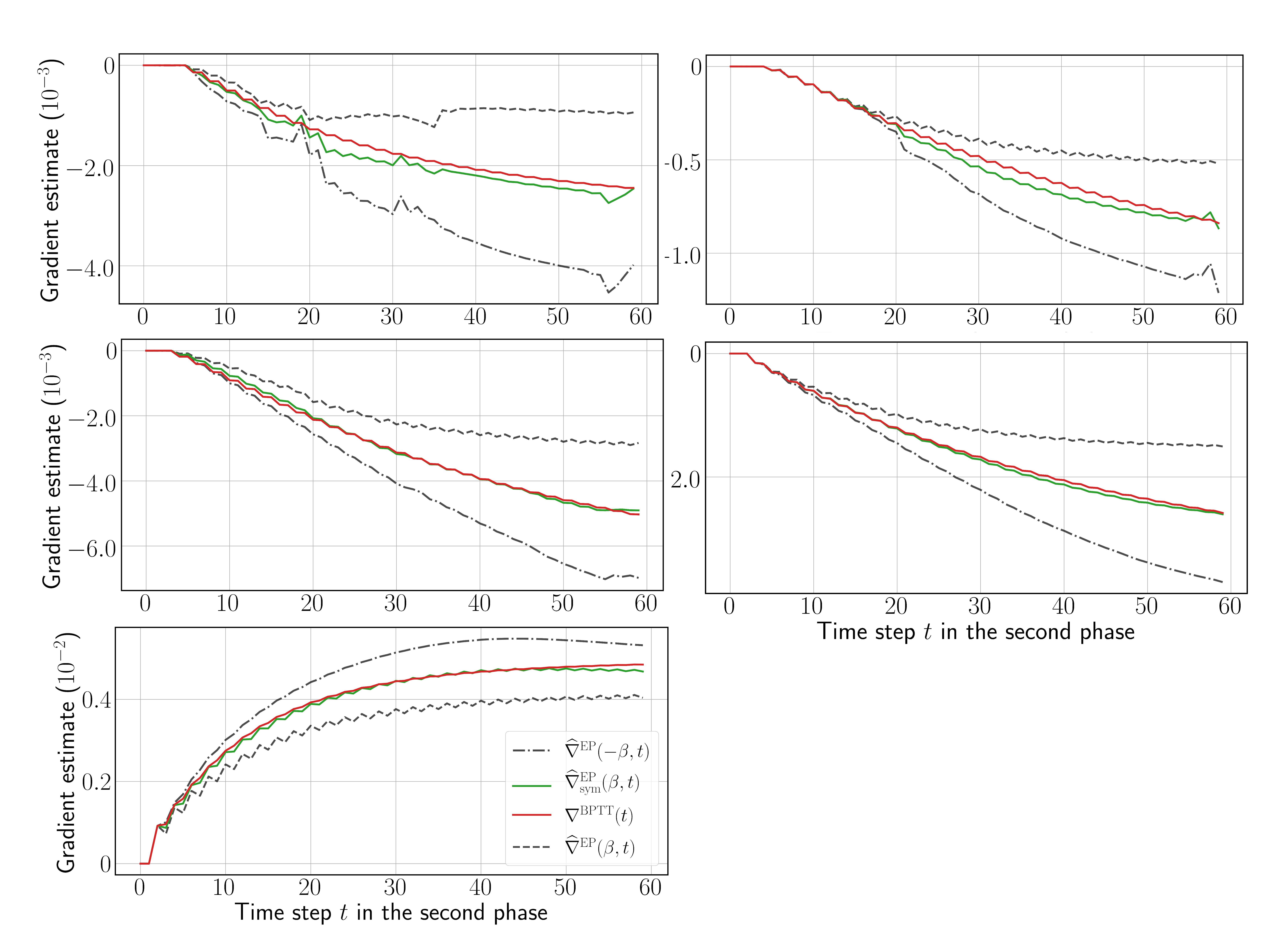}
  \caption{Layer-wise comparison between EP gradient estimates and BPTT gradients for 5 layers deep CNN on CIFAR-10 Data. Layer index increases from top to bottom, left to right, top-left being the first layer.
  }
  \label{fig:morecurve}
\end{figure}

\end{document}